\documentclass[11pt]{article}
\usepackage[margin=1in]{geometry}
\usepackage[T1]{fontenc}

\usepackage{natbib}
\usepackage[hyphens]{url}

\usepackage{enumitem}
\usepackage{footnote}
\usepackage[bottom]{footmisc}
\interfootnotelinepenalty=10000

\usepackage{xcolor}

\usepackage{algorithm}
\usepackage[noend]{algpseudocode}

\makeatletter
\algnewcommand{\Statexx}[1]{\Statex \hskip\ALG@thistlm #1}
\makeatother
\algnewcommand{\LComment}[1]{\Statexx{\textcolor{gray}{\(\triangleright\) \textit{#1}}}}

\usepackage{rcx}
\usepackage{amssymb}

\hypersetup{
  colorlinks   = true,
  urlcolor     = blue,
  linkcolor    = blue,
  citecolor    = blue
}


\usepackage{mathrsfs}
\usepackage{nicefrac}


\DeclareFontFamily{U}{mathx}{}
\DeclareFontShape{U}{mathx}{m}{n}{<-> mathx10}{}
\DeclareSymbolFont{mathx}{U}{mathx}{m}{n}
\DeclareMathAccent{\widecheck}{0}{mathx}{"71}

\newcommand*\centermathcell[1]{\omit\hfil$\displaystyle#1$\hfil\ignorespaces}

\newcommand{\proj}{\operatorname{\mathrm{proj}}}
\newcommand{\ER}{\operatorname{\mathit{ER}}}

\newcommand{\Prob}{\hyperlink{eq:p}{\mathrm{P}}}
\newcommand{\LProb}{\hyperlink{eq:lp}{\mathrm{LP}}}

\newcommand{\dTV}{D_\textnormal{TV}}
\newcommand{\dKS}{D_\textnormal{KS}}
\newcommand{\BKS}{B_\textnormal{KS}}
\newcommand{\DSP}{\Delta_\textnormal{SP}}


\title{Differentially Private Post-Processing for Fair Regression}
\date{}

\usepackage{authblk}
\ifpdf

\author[1]{Ruicheng Xian}
\author[1]{Qiaobo Li}
\author[2]{Gautam Kamath}
\author[1]{Han Zhao}
\affil[1]{University of Illinois Urbana-Champaign\protect\\{\small\texttt{\{\href{mailto:rxian2@illinois.edu}{rxian2},\href{mailto:qiaobol2@illinois.edu}{qiaobol2},\href{mailto:hanzhao@illinois.edu}{hanzhao}\}@illinois.edu}}\vspace{0.5em}}
\affil[2]{University of Waterloo and Vector Institute\protect\\{\small\texttt{\href{mailto:g@csail.mit.edu}{g@csail.mit.edu}}}}
\else
\author[1]{Ruicheng Xian}
\author[2]{Qiaobo Li}
\author[3]{Gautam Kamath}
\author[4]{Han Zhao}
\affil[1]{University of Illinois Urbana-Champaign\protect\\{\small\texttt{\href{mailto:rxian2@illinois.edu}{rxian2@illinois.edu}}}}
\affil[2]{University of Illinois Urbana-Champaign\protect\\{\small\texttt{\href{mailto:qiaobol2@illinois.edu}{qiaobol2@illinois.edu}}}}
\affil[3]{University of Waterloo and Vector Institute\protect\\{\small\texttt{\href{mailto:g@csail.mit.edu}{g@csail.mit.edu}}}}
\affil[4]{University of Illinois Urbana-Champaign\protect\\{\small\texttt{\href{mailto:hanzhao@illinois.edu}{hanzhao@illinois.edu}}}}
\fi

\begin{document}

\maketitle

\begin{abstract}
This paper describes a differentially private post-processing algorithm for learning fair regressors satisfying statistical parity, addressing privacy concerns of machine learning models trained on sensitive data, as well as fairness concerns of their potential to propagate historical biases. Our algorithm can be applied to post-process any given regressor to improve fairness by remapping its outputs. It consists of three steps: first, the output distributions are estimated privately via histogram density estimation and the Laplace mechanism, then their Wasserstein barycenter is computed, and the optimal transports to the barycenter are used for post-processing to satisfy fairness. We analyze the sample complexity of our algorithm and provide fairness guarantee, revealing a trade-off between the statistical bias and variance induced from the choice of the number of bins in the histogram, in which using less bins always favors fairness at the expense of error.
\end{abstract}

\section{Introduction}

Prediction and forecasting models trained from machine learning algorithms are ubiquitous in real-world applications, whose performance hinges on the availability and quality of training data, often collected from end-users or customers. This reliance on data has raised ethical concerns including fairness and privacy. Models trained on past data may propagate and exacerbate historical biases against disadvantaged demographics, and producing less favorable predictions~\citep{bolukbasi2016ManComputerProgrammer,buolamwini2018GenderShadesIntersectional}, resulting in unfair treatments and outcomes especially in areas such as criminal justice, healthcare, and finance~\citep{barocas2016BigDataDisparate,berk2021FairnessCriminalJustice}. Models also have the risk of leaking highly sensitive private information in the training data collected for these applications~\citep{dwork2014AlgorithmicFoundationsDifferential}.

While there has been significant effort at addressing these concerns, few treats them in combination, i.e., designing algorithms that train fair models in a privacy-preserving manner.  A difficulty is that privacy and fairness may not be compatible: exactly achieving group fairness criterion such as \textit{statistical parity} or \textit{equalized odds} requires precise (estimates of) group-level statistics, but for ensuring privacy, only noisy statistics are allowed under the notion of \textit{differential privacy}.
Resorting to approximate fairness, prior work has proposed private learning algorithms for reducing disparity, but the focus has been on the classification setting~\citep{jagielski2019DifferentiallyPrivateFair,xu2019AchievingDifferentialPrivacy,mozannar2020FairLearningPrivate,tran2021DifferentiallyPrivateFair}.

In this paper, we propose and analyze a differentially private post-processing algorithm for learning \textit{attribute-aware} fair regressors under the squared loss, with respect to the fairness notion of statistical parity. It can take any (privately pre-trained) regressor and remaps its outputs (with minimum deviations) to improve fairness. At a high-level, our algorithm consists of three steps: estimating the output distributions of the regressor from data, computing their Wasserstein barycenter, and the optimal transports~\citep{chzhen2020FairRegressionWasserstein,legouic2020ProjectionFairnessStatistical,xian2023FairOptimalClassification}. To make this process differentially private, we use private histogram density estimates~(HDE) for the distributions via the Laplace mechanism~\citep{diakonikolas2015DifferentiallyPrivateLearning,xu2012DifferentiallyPrivateHistogram}, followed by re-normalization, which introduces additional complexity in our analysis. The choice of the number of bins in the HDE induces a trade-off between the statistical bias and variance for the cost of privacy and fairness. Our theoretical analysis and experiments show that using less bins always improves fairness at the expense of higher error.

\paragraph{Paper Organization.} In~\cref{sec:prelim}, we introduce definitions, notation, and the problem setup. \Cref{sec:alg} describes our private post-processing algorithm for learning fair regressors, with finite sample analysis of the accuracy-privacy-fairness trade-offs in~\cref{sec:thm}. Finally in~\cref{sec:exp}, we empirically explore the trade-offs achieved by our post-processing algorithm on Law School and Communities \& Crime datasets.

\subsection{Related Work}\label{sec:rel}
\paragraph{Differential Privacy.}
Under the notion of differential privacy~\citep{dwork2006CalibratingNoiseSensitivity} and subsequent refinements such as R\'enyi DP~\citep{mironov2017RenyiDifferentialPrivacy}, a variety of private learning algorithms are proposed for settings including regression, classification, and distribution learning. There are private variants of logistic regression~\citep{papernot2017SemisupervisedKnowledgeTransfer}, decision trees~\citep{fletcher2019DecisionTreeClassification}, and linear regression~\citep{wang2018RevisitingDifferentiallyPrivate,covington2021UnbiasedStatisticalEstimation,alabi2022DifferentiallyPrivateSimple}. 
The problem of private distribution learning has been studied for finite-support distributions~\citep{xu2012DifferentiallyPrivateHistogram,diakonikolas2015DifferentiallyPrivateLearning} and parameterized families~\citep{bun2019PrivateHypothesisSelection,aden-ali2021SampleComplexityPrivately}. Lastly, private optimization algorithms including those based on objective perturbation~\citep{chaudhuri2011DifferentiallyPrivateEmpirical,kifer2012PrivateConvexEmpirical} and DP-SGD~\citep{song2013StochasticGradientDescent,bassily2014PrivateEmpiricalRisk,abadi2016DeepLearningDifferential} are proposed for solving convex and general optimization problems.  

\paragraph{Algorithmic Fairness.}
In parallel, the study of algorithmic fairness revolves around the formalization of fairness notions and design of bias mitigation methods~\citep{barocas2023FairnessMachineLearning}. Fairness criteria include those that focus on disparate impact of the model, such as individual fairness~\citep{dwork2012FairnessAwareness}, which asks the model to treat similar inputs similarly, or group-level statistical parity~\citep{calders2009BuildingClassifiersIndependency} and equalized odds~\citep{hardt2016EqualityOpportunitySupervised}; and, those on performance inequality, e.g., accuracy parity~\citep{buolamwini2018GenderShadesIntersectional,chi2021UnderstandingMitigatingAccuracy}, predictive parity~\citep{chouldechova2017FairPredictionDisparate}, and multi-calibration~\citep{hebert-johnson2018MulticalibrationCalibrationComputationallyIdentifiable}.  Fair algorithms are categorized into pre-processing, by removing biased correlation in the training data~\citep{calmon2017OptimizedPreProcessingDiscrimination}; in-processing, that turns the original learning problem into a constrained one~\citep{kamishima2012FairnessAwareClassifierPrejudice,zemel2013LearningFairRepresentations,agarwal2018ReductionsApproachFair,agarwal2019FairRegressionQuantitative}; and post-processing, which remaps the predictions of a trained model post-hoc to meet the fairness criteria~\citep{hardt2016EqualityOpportunitySupervised,pleiss2017FairnessCalibration,chzhen2020FairRegressionWasserstein,zhao2022InherentTradeoffsLearning,xian2023FairOptimalClassification}.

\paragraph{Fairness and Privacy.}
Private fair learning and bias mitigation algorithm are proposed and studied in prior work~\citep{jagielski2019DifferentiallyPrivateFair,xu2019AchievingDifferentialPrivacy,mozannar2020FairLearningPrivate,tran2021DifferentiallyPrivateFair}, but they have so far been focused on the classification setting. It remains an open question on how to achieve private fair regression, and what are the trade-offs between privacy, fairness, and accuracy.

\citet{cummings2019CompatibilityPrivacyFairness} and \citet{agarwal2020TradeOffsFairnessPrivacy} showed that fairness and privacy are incompatible in the sense that no $\varepsilon$-DP algorithm can generally guarantee group fairness on the training set (from which population-level guarantees can be derived via generalization), unless the hypothesis class is restricted to constant predictors.  The argument is that a predictor $f$ that is fair on $S$ may not be fair on its neighbor $S'$, so an $\varepsilon$-DP algorithm that outputs $f$ on $S$ with nonzero probability may also output $f$ on $S'$, which is unfair.  Work on private fair algorithms circumvent this incompatibility by relaxing to high probability guarantees for fairness.  
\citet{bagdasaryan2019DifferentialPrivacyHas} showed that the performance impact of privacy may be more severe on underrepresented groups, resulting in accuracy disparity.

\section{Preliminaries}
\label{sec:prelim}

A regression problem is defined by a joint distribution $\mu$ of the observations $X\in\calX$, sensitive attribute $A\in\calA$ (which has finite support), and the response $Y\in\RR$.  We will use upper case $X,A,Y$ to denote the random variables, and lower case $x,a,y$ instances of them. The goal of this paper is to develop a \textit{privacy-preserving} \textit{post-processing} algorithm for learning (randomized) \textit{fair} regressors. We consider the attribute-aware setting, i.e., the sensitive attribute $A$ is available explicitly during both training and prediction and can be taken as input by the regressor, $f:\calX\times\calA\rightarrow\RR$. 

The risk of a regressor is defined to be mean squared error, and its excess risk is defined with respect to the Bayes regressor, $f^*(x,a)\coloneqq\E[Y\mid X=x,A=a]$:
\begin{align}
    R(f) &:= \E[(f(X,A)-Y)^2], \\
    \ER(f) &:= R(f)-R(f^*) = \E[(f(X,A)-f^*(X,A))^2]
\end{align}
by the orthogonality principle, where $\E$ is taken with respect to $\mu$ (and the randomness of $f$).

Given a (training) dataset $S$ consisting of $n$ i.i.d.\ samples of $(X,A,Y)$, a private learning algorithm minimizes the leakage of any individual's information in its output.  We use the notion of differential privacy~\citep{dwork2006CalibratingNoiseSensitivity}, which limits the influence of any single training sample:
\begin{definition}[Differential Privacy]
    A randomized (learning) algorithm $\mathscr{A}$ is $\varepsilon$-differentially private~(DP) if for all pairs of nonempty neighboring datasets $S,S'$, 
    \begin{equation}
        \Pr(\mathscr{A}(S)\in O) \leq e^\varepsilon \Pr(\mathscr{A}(S')\in O), \quad \forall O\subseteq\operatorname{\mathrm{range}}(\mathscr{A}),
    \end{equation}
    where $\Pr$ is taken with respect to the randomness of $\mathscr{A}$.
\end{definition}

We say two datasets are neighboring if they differ in one entry by substitution (our result also covers insertion and deletion operations, which may have lower sensitivity).  Note that privacy is guaranteed with respect to both $A$ and $X$.

For fairness, we consider statistical parity~\citep{calders2009BuildingClassifiersIndependency}, which requires the output distributions of $f$ conditioned on each group to be similar. The similarity between distributions is measured in Kolmogorov–Smirnov distance, defined for probability measures $p,q$ supported on $\RR$ by
\begin{equation}
    \dKS(p,q) = \sup_{t\in\RR}\envert*{\int_{-\infty}^t\rbr*{p(x)-q(x)}\dif x }.
\end{equation}

\begin{definition}[Statistical Parity]
\label{def:fair}
A (randomized) regressor $f:\calX\times\calA\rightarrow\RR$ satisfies $\alpha$-approximate statistical parity~(SP) if 
\begin{equation}
  \DSP(f)\coloneqq \max_{a,a'\in\calA}\dKS(r_a,r_{a'})\leq\alpha,
\end{equation}
where $r_a$ is the distribution of regressor output $f(X,A)$ conditioned on $A=a$.
\end{definition}

\section{Fair and Private Post-Processing}
\label{sec:alg}

\begin{figure}[t]
  \centering
      \includegraphics[width=\linewidth]{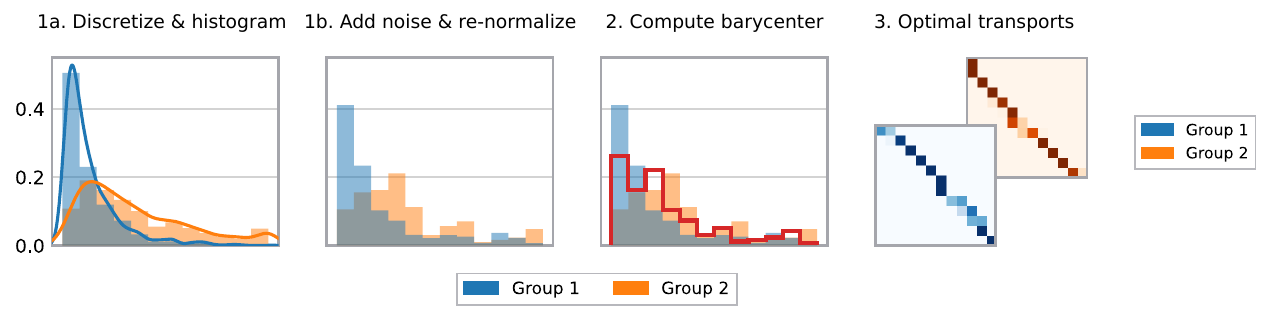}
      \caption{Illustration of the private fair post-processing steps~\ref{it:step.1}--\ref{it:step.3} performed in \cref{alg:post.proc}.  The (randomized) transports to the barycenter are represented by (sparse) $k\times k$ matrices, and the value at the $(i,j)$-th entry is the probability of transporting to bin $j$ given bin $i$.}
      \label{fig:steps}
\end{figure}

We describe a private post-processing algorithm such that, given an unlabeled training dataset $S=\{(x_{i},a_{i})\}_{i\in[n]}$ sampled from $\mu$, and a (privately) pre-trained regressor $f$ (e.g., learned using the algorithms mentioned in \cref{sec:rel}), the algorithm learns a mapping $g$ that transforms the output of $f$ (a.k.a.~post-processing) so that $\bar f\coloneqq g\circ f$ is fair, while preserving differential privacy with respect to $S$.

Compared to in-processing approaches for fairness, post-processing decouples the conflicting goals of fairness and error minimization, since the regressor $f$ can be trained to optimality without any constraint, and then post-processed to satisfy fairness.  We show that this decoupling does not affect the optimality of the resulting fair regressor $\bar f$---the optimal fair regressor can be recovered via post-processing if the algorithm in the pre-training stage learns the Bayes regressor $f^*$ (\cref{thm:opt.fair.equiv}).  Moreover, post-processing has low sample complexity and only requires unlabeled data (\cref{thm:main}), so labeled data (that may be scarce in some domains) can be dedicated entirely to error minimization, which is typically a more difficult problem.

\subsection{Fair Post-Processing with Wasserstein-Barycenters}
\label{sec:post}

Exact statistical parity ($\alpha=0$) requires all groups to have identical output distributions, so finding the fair post-processing mapping $g$ that incurs minimum deviations from the original outputs of $f$ amounts to the following steps~\citep{chzhen2020FairRegressionWasserstein,legouic2020ProjectionFairnessStatistical}:
\begin{enumerate}
    \item\label{it:step.1} Learn the output distributions of $f$ conditioned on each group, $r_a$, for all $a\in\calA$.
    \item\label{it:step.2} Find a common (fair) distribution $q$ that is close to the original distributions (i.e., their barycenter).
    \item\label{it:step.3} Compute the optimal transports $g_a$ from $r_a$ to the barycenter $q$.
\end{enumerate}
The (randomized) transports are then applied to post-process $f$ to obtain a fair attribute-aware regressor $\bar f(x,a)= g_a\circ f(x)$, so that every group has the same output distribution equal to $q$.

Formally, $q$ is called the Wasserstein barycenter of the $r_a$'s, which is a distribution supported on $\RR$ with minimum total distances to the $r_a$'s as measured in Wasserstein distance:
\begin{equation}
  q\in\argmin_{q':\supp(q')\subseteq\RR} \sum_{a\in\calA} w_a W_2^2(r_a, q'), \label{eq:barycenter}
\end{equation}
where $w_a\coloneqq \Pr(A=a)$,
\begin{equation}
  W_2^2(r_a, q) = \min_{\pi_a\in\Pi(r_a, q)}\int_{\RR\times\RR} (y-y')^2\dif\pi_a(y,y'), \label{eq:w2}
\end{equation}
and $\Pi(p,q)=\cbr{\pi:\supp(\pi)\subseteq\RR\times\RR,\int \pi(x,y')\dif y'=p(x), \int\pi(x',y)\dif x'=q(y),\forall x,y }$ is the collection of probability couplings of $p,q$.  The value of $W_2^2(r_a, q)$ naturally represents the squared cost of transforming $r_a$ into $q$, i.e., the minimum amount of output deviations (in squared distance) required to post-process $f$ for group $a$ so that its output distribution is transformed to $q$ (which is a consequence of \cref{lem:fair.equiv}); specifically, this optimal post-processor is given by the optimal transport from $r_a$ to $q$.

\subsection{Generalization to Approximate Statistical Parity}
\label{sec:approx}
To handle approximate statistical parity ($\alpha>0$), in step~\ref{it:step.2}, we first replace the barycenter in \cref{eq:barycenter} by a KS ball of radius $\nicefrac\alpha2$ and relax the problem to finding target output distributions $q_a$ for each group $a$ inside the ball with minimum $W_2^2$ distance to the original distribution $r_a$. Then in step~\ref{it:step.3}, we find the optimal transports $g_a$ from $r_a$ to $q_a$.  So, the problem in \cref{eq:barycenter} is generalized to the approximate SP case with
\begin{flalign}
  \quad\hypertarget{eq:p}{\mathrm{P}}(\{r_a\}_{a\in\calA}, \{w_a\}_{a\in\calA},\alpha): &&
  \argmin_{\substack{q:\supp(q)\subseteq\RR\\ \{q_a\}_{a\in\calA}\subset \BKS(q,\nicefrac\alpha2)}} \sum_{a\in\calA} w_a W^2_2(r_a, q_a), &&&&&&&
\end{flalign}
where $\BKS(q,\nicefrac\alpha2)=\{p:\dKS(p,q)\leq\nicefrac\alpha2\}$ is a KS ball centered at $q$ on the space of probability distributions supported on $\RR$.

When learning from finite samples $S$ (non-privately), we replace $r_a,w_a$ in $\Prob$ with the respective empirical distributions/estimates.  Computationally, solving $\Prob$ on the empirical distributions is as hard as the finite-support Wasserstein barycenter problem $(\alpha=0)$, which can be formulated by a linear program of exponential size in $|\calA|$~\citep{anderes2016DiscreteWassersteinBarycenters,altschuler2021WassersteinBarycentersCan}.
To reduce the complexity, we restrict the support of the barycenter $q$ via discretization; this fixed-support approximation is common in prior work~\citep{cuturi2014FastComputationWasserstein,staib2017ParallelStreamingWasserstein}.

The validity of the post-processing algorithm described above is supported by the fact that, if the regressor being post-processed is Bayes optimal, then the resulting fair regressor is also optimal.  This has been established for the exact SP case in prior work~\citep[e.g.,][Theorem 3]{legouic2020ProjectionFairnessStatistical}, and here we provide a more general result for approximate SP:

\begin{theorem}\label{thm:opt.fair.equiv}
  Let a regression problem be given by a joint distribution $\mu$ of $(X,A,Y)$, denote $w_a=\Pr(A=a)$, the Bayes regressor by $f^*(x,a)=\E[Y\mid X=x,A=a]$, and its output distribution conditioned on group $a$ by $r^*_a$.  Then
  \begin{equation}
    \min_{f:\DSP(f)\leq\alpha}\ER(f) =\hspace{-0.5em} \min_{\substack{q:\supp(q)\subseteq\RR\\ \{q_a\}_{a\in\calA}\subset \BKS(q,\nicefrac\alpha2)}} \sum_{a\in\calA} w_a W^2_2(r^*_a, q_a).
  \end{equation}
\end{theorem}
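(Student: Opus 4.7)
The plan is to establish the two inequalities separately: $(\leq)$ by an explicit construction that post-processes the Bayes regressor $f^*$ via optimal transport, and $(\geq)$ by a coupling argument combined with the existence of a common KS-center for the output distributions of any fair regressor.

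For $(\leq)$, let $(q,\{q_a\}_{a\in\calA})$ attain the minimum on the right-hand side, and let $\pi_a$ denote an optimal transport plan realizing $W_2^2(r^*_a, q_a)$. I construct a randomized map $g_a:\RR\to\RR$ such that if $Y\sim r^*_a$ then $(Y,g_a(Y))\sim\pi_a$, by sampling $g_a(y)$ from the regular conditional $\pi_a(\cdot\mid Y{=}y)$, and set $f(x,a)\coloneqq g_a(f^*(x,a))$. Conditioned on $A=a$, the pair $(f^*(X,A), f(X,A))$ is distributed as $\pi_a$, so $\E[(f(X,A)-f^*(X,A))^2\mid A=a] = W_2^2(r^*_a,q_a)$ and the conditional output law of $f$ equals $q_a$. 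Averaging with weights $w_a$ gives $\ER(f)=\sum_a w_a W_2^2(r^*_a,q_a)$, and $\dKS(q_a,q_{a'})\leq \dKS(q_a,q)+\dKS(q,q_{a'})\leq \alpha$ by the triangle inequality, so $f$ is feasible on the left-hand side.

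For $(\geq)$, take any $f$ with $\DSP(f)\leq\alpha$ and let $r_a$ be its output distribution on group $a$, with CDF $F_a$. The joint law of $(f^*(X,A), f(X,A))$ given $A=a$ is a coupling of $r^*_a$ and $r_a$ (enlarged by the internal randomness of $f$ when needed), so by the definition of $W_2^2$,
\[\ER(f)=\sum_{a\in\calA} w_a\,\E[(f(X,A)-f^*(X,A))^2\mid A=a] \;\geq\; \sum_{a\in\calA} w_a W_2^2(r^*_a,r_a).\]
To match the right-hand side, I exhibit a common $q$ with $r_a\in\BKS(q,\nicefrac\alpha2)$ for all $a$, which makes $(q,\{r_a\})$ feasible. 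I take $q$ to be the probability measure with CDF $F(t):=\tfrac12\bigl(\min_a F_a(t)+\max_a F_a(t)\bigr)$: the pointwise $\min$ and $\max$ of finitely many CDFs are again non-decreasing and right-continuous with limits $0$ at $-\infty$ and $1$ at $+\infty$, and their average inherits these properties, so $F$ is a valid CDF. Since $\max_a F_a(t)-\min_a F_a(t)\leq \DSP(f)\leq\alpha$ for every $t$ and $F(t)$ is the midpoint of this range, $\sup_t |F(t)-F_a(t)|\leq \alpha/2$.

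The main subtlety is this construction of a common KS-center from only pairwise constraints, which works because $\dKS$ is a sup-norm on CDFs and the lattice operations $\min,\max$ preserve monotonicity and right-continuity; an analogous halving would fail for total-variation or Wasserstein distance, which is precisely why the factor $\nicefrac\alpha2$ appears in the theorem. The remaining ingredients—reducing conditional excess risk to $W_2^2$ via couplings and handling randomized regressors by enlarging the coupling—are standard once the optimal-transport picture is in place.
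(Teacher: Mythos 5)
Your proof is correct and rests on the same two ingredients as the paper's: the correspondence between couplings $\pi\in\Pi(r^*_a,q_a)$ and randomized post-processings of $f^*$ (this is exactly the content of \cref{lem:fair.equiv}), and the reformulation of the pairwise KS constraints as membership of all group output laws in a common KS ball of radius $\nicefrac\alpha2$. The packaging differs: the paper proves the coupling--regressor correspondence as a two-way equivalence and then writes the theorem as a chain of equalities with a nested minimization over output shapes, whereas you prove the two inequalities separately, using only the ``coupling $\Rightarrow$ regressor'' direction for $\leq$ (disintegrating the optimal plan $\pi_a$ to build $g_a\circ f^*$) and the ``regressor $\Rightarrow$ coupling'' direction, as an inequality, for $\geq$, then certifying feasibility by plugging the actual output laws $r_a$ in as the $q_a$'s. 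One point where your write-up is sharper than the paper's: the paper's omitted KS-center claim is hinted only for two measures (average their CDFs), which does not extend verbatim to $|\calA|\geq 3$; your Chebyshev-center construction $F=\frac12(\min_a F_a+\max_a F_a)$, valid because $\dKS$ is a sup-norm on CDFs and the pointwise lattice operations preserve the CDF properties, is the correct general construction and is exactly what the multi-group step needs. As in the paper, attainment of the minima and of optimal transport plans is taken for granted; this is standard for $W_2$ on $\RR$ (or can be handled with near-minimizers), but it deserves a sentence if you want full rigor.
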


This shows that the excess risk of the optimal fair regressor is indeed the value of the Wasserstein barycenter problem~($\Prob$) discussed above, and can be achieved by $f^*$ post-processed using the optimal transports to the barycenter.  All proofs are deferred to \cref{app.opt,app.proof}; this result follows from an equivalence between learning regressors and learning post-processings of $f^*$.

\subsection{Privacy via Discretization and Private PMF Estimation}

\begin{algorithm}[t]
    \caption{Fair and Private Post-Processing (Attribute-Aware)}
    \label{alg:post.proc}
    \begin{algorithmic}[1]
            \Require{Regressor $f:\calX\times\calA\rightarrow\RR$, samples $\{x_{i},a_i\}_{i\in[n]}$, interval $[s,t]$, number of bins $k$, fairness tolerance $\alpha$, privacy budget $\varepsilon$}
            \State Let $v_j=\nicefrac{(j-1/2)(t-s)}{k}$ for all $j\in[k]$\Comment{midpoints of the bins}
            \State Define $h(y) = \argmin_{j\in[k]}|y-v_j|$ \Comment{discretizer}
            \State Define $\hat p(a, v_j) = \nicefrac1n \sum_{i\in[n]} \1[h\circ f(x_{i}, a_{i}) = v_j]$ \Comment{empirical joint distribution} \label{ln:dist.1}
            \State Define $\check p(a, v_j) = \hat p(a, v_j) + \mathrm{Laplace}(0,\nicefrac{2}{n\varepsilon})$ \Statex\Comment{Laplace mechanism to get private joint distribution} \label{ln:dist.2}
            \For{$a\in\calA$}
            \State $\tilde w_a \gets \max(\sum_{j = 1}^{k} \check p(a,v_j), 0)$  \Comment{private group marginal distribution} \label{ln:dist.3}
            \State Define $\widecheck{F}_a(v_j) = \nicefrac1{\tilde w_a} \sum_{\ell=1}^j\check p(a, v_\ell)$ \Comment{scaled partial sums} \label{ln:dist.4}
            \State Define $\widetilde{F}_a(v_j) = \begin{cases}
              \proj_{[0,1]}(\nicefrac12(\widecheck F_a(v_{l_j}) + \widecheck F_a(v_{r_j}))) & \text{if } j< k \\
              1 & \text{else}
            \end{cases}$ \label{ln:dist.5}
            \Statexx{where $(l_j,r_j)=\argmax_{l\leq j\leq r}(\widecheck F_a(v_l)-\widecheck F_a(v_r))$}
            \Statex \Comment{$L^\infty$ isotonic regression and clipping to get private CDF}
            \State Define $\tilde p_a(v_j)=\widetilde F_a(v_j)-\widetilde F_a(v_{j-1})$ \Comment{private group conditional PMF} \label{ln:dist.6}
            \EndFor
            \State $(\{\pi_a\}_{a\in\calA}, \tilde q,\{\tilde q_a\}_{a\in\calA}) \gets\LProb(\{\tilde p_a\}_{a\in\calA},\{\tilde w_a\}_{a\in\calA},\alpha)$ \Statex\Comment{compute barycenter and get optimal transports} \label{ln:lp}
            \For{$a\in\calA$} \label{ln.post.2}
            \State Define $g_a(v_j)= \begin{cases}
                v_\ell \textrm{ w.p. } \nicefrac{\pi_a(v_j,v_\ell)}{\tilde p_a(v_j)},\;\;\forall \ell\in[k] &  \textrm{if } \tilde p_a(v_j) > 0  \\
                v_j & \textrm{else}
            \end{cases}$ \Comment{post-processing mappings} \label{ln.post.3}
            \EndFor
            \State \Return $(x,a)\mapsto g_a\circ h\circ f(x,a)$ \Comment{privately post-processed fair regressor} \label{ln.post.4}
    \end{algorithmic}
\end{algorithm}

To make the fair post-processing algorithm in \cref{sec:post} private with respect to $S$, it suffices to perform step~\ref{it:step.1} of estimating the output distributions privately.  This is because the subsequent steps~\ref{it:step.2}~and~\ref{it:step.3} of computing the barycenter (including the target distributions) and the optimal transports only depend on the estimated distributions, so privacy is preserved by the post-processing immunity of DP\@.

To estimate the distributions, one could construct a family of distributions (a.k.a.\ hypotheses) and (privately) choose the most likely one~\citep{bun2019PrivateHypothesisSelection,aden-ali2021SampleComplexityPrivately}, or use non-parametric estimators. For generality, we adopt the latter approach and use (private) histogram density estimator~\citep{diakonikolas2015DifferentiallyPrivateLearning,xu2012DifferentiallyPrivateHistogram}.

Altogether, our fair and private post-processing algorithm is detailed in \cref{alg:post.proc} and illustrated in \cref{fig:steps}.  It takes as inputs the regressor $f$ being post-processed, the samples $S$, the fairness tolerance $\alpha$, and the privacy budget $\varepsilon$. For performing HDE, it also requires specifying an interval\footnote{This is necessary for pure $(\varepsilon,0)$-DP, due to a lower bound by \citet{hardt2010GeometryDifferentialPrivacy} using a packing argument.} (using prior/domain knowledge) that contains the image of the regressor $[s,t]\supseteq f[\supp(\mu_{X,A})]$, and the number of bins $k$.  We describe the algorithm in details below:

\paragraph{Step~\ref{it:step.1} \textnormal{(Estimate Output Distributions)}.}  On line~\ref{ln:dist.1}, we compute the empirical joint distribution of $A$ and $h\circ f(X,A)$, the discretized regressor output (the support is $v\coloneqq(\{\nicefrac{(j-1/2)(t-s)}{k}\}_{j\in[k]})$).  Then, we make this statistics private via the Laplace mechanism on line~\ref{ln:dist.2}, noting that the $L^1$-sensitivity to $\hat p$ is at most $\nicefrac{2}{n}$ (\cref{rem:sensitivity}).\footnote{\label{fn:laplace}The Laplace mechanism could be replaced by, e.g., the Gaussian mechanism, to relax the pure $(\varepsilon,0)$-DP guarantee to approximate $(\varepsilon,\delta)$-DP\@.}

With the private joint distribution, we get a private estimate of the group marginal distribution $\tilde w_a$'s (clipping negative values to zero) on line~\ref{ln:dist.3}, and the private group conditional discretized output distribution $\tilde p_a$'s (with re-normalization) on lines~\ref{ln:dist.4}--\ref{ln:dist.6}.  The re-normalization on $\tilde p_a$ (could also be applied to $\tilde w_a$), required for defining the optimal transport problem in the subsequent step, is done by performing isotonic regression on the partial sums (so that its values are non-decreasing) with clipping to get a valid CDF\@.

We analyze the accuracy of the private estimates:
\begin{theorem}\label{thm:private.estimates}
  Let $p_a(v_j)=\Pr(h\circ f(X,A) = v_j \mid A=a)$ and $w_a=\Pr(A=a)$ for all $a\in\calA$, $j\in[k]$, and let $\tilde p_a$, $\tilde w_a$ denote their privately estimated counterparts in \cref{alg:post.proc}.  Then for all $n\geq \Omega(\max_a\nicefrac1{w_a}\ln\nicefrac1\beta)$, with probability at least $1-\beta$, for all $a\in\calA$,
  \begin{equation}
  \envert{w_a-\tilde w_a} \leq O\rbr*{\frac{\sqrt{k}}{n\varepsilon}\ln\frac{k|\calA|}{\beta}},
  \end{equation}
  and
  \begin{align}
    \|p_a-\tilde p_a\|_1 &\leq  O\rbr*{ \sqrt{\frac{k}{nw_a}\ln\frac{k|\calA|}\beta}  +  \frac{k}{n w_a\varepsilon}\ln\frac{k|\calA|}{\beta} },\\
    \|p_a-\tilde p_a\|_\infty &\leq O\rbr*{ \sqrt{\frac{1}{nw_a}\ln\frac{|\calA|}\beta}   +  \frac{\sqrt{k}}{n w_a\varepsilon}\ln\frac{k|\calA|}{\beta}}, \\
    \dKS(p_a,\tilde p_a) &\leq O\rbr*{ \sqrt{\frac{k}{nw_a}\ln\frac{k|\calA|}\beta}  +  \frac{\sqrt{k}}{n w_a\varepsilon}\ln\frac{k|\calA|}{\beta} }.
  \end{align}
\end{theorem}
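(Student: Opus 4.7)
My approach is to decompose the error in each estimate into three sources: (i) the statistical sampling error from $\hat p$ versus the true joint distribution, (ii) the Laplace noise introduced for privacy, and (iii) the distortion from the renormalization and the $L^\infty$ isotonic regression with clipping. I would control these separately, condition on high-probability good events, and combine via the triangle inequality.

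For $\tilde w_a$, by definition $\tilde w_a = \max\{0, \hat w_a + Z_a\}$, where $\hat w_a = \sum_j \hat p(a, v_j)$ is the empirical group marginal and $Z_a$ is a sum of $k$ independent $\mathrm{Laplace}(0, 2/(n\varepsilon))$ random variables. Since truncation at zero can only reduce the distance to the nonnegative $w_a$, we get $|\tilde w_a - w_a| \leq |\hat w_a - w_a| + |Z_a|$. A Hoeffding bound on $\hat w_a$ and sub-exponential (Bernstein) concentration for sums of Laplace variables with variance $O(k/(n\varepsilon)^2)$, followed by a union bound over $a \in \calA$, give the claimed $O(\sqrt{k}/(n\varepsilon)\ln(k|\calA|/\beta))$ rate.

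For the PMF estimates I would first analyze the intermediate CDF $\widecheck F_a$ against the true CDF $F_a$. Writing $\widecheck F_a(v_j) = \bigl(\sum_{\ell \leq j}\hat p(a, v_\ell) + \sum_{\ell\leq j} L_{a,\ell}\bigr)/\tilde w_a$, I would split $\widecheck F_a(v_j) - F_a(v_j)$ into three terms: the empirical CDF deviation $(\hat F_a - F_a)(v_j)$, bounded uniformly in $j$ via a DKW/Bernstein argument conditioned on the number of group-$a$ samples (which concentrates around $nw_a$ once $n = \Omega(w_a^{-1}\log(1/\beta))$, explaining the algorithm's sample-size hypothesis); a partial sum of at most $k$ Laplace noises divided by $\tilde w_a \approx w_a$; and a multiplicative renormalization term $F_a(v_j)(1 - \hat w_a/\tilde w_a)$, already controlled by the previous paragraph. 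The decisive structural step is the optimality of $L^\infty$ isotonic regression with clipping: because $F_a$ is itself monotone and $[0,1]$-valued and hence feasible, $\|\widetilde F_a - \widecheck F_a\|_\infty \leq \|F_a - \widecheck F_a\|_\infty$, so $\|\widetilde F_a - F_a\|_\infty \leq 2\|\widecheck F_a - F_a\|_\infty$ by the triangle inequality. This immediately yields the KS bound and, via $\|\tilde p_a - p_a\|_\infty \leq 2\|\widetilde F_a - F_a\|_\infty$, the $L^\infty$ PMF bound.

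For the $L^1$ bound I would avoid routing through the CDF sup-norm (which inflates by a factor of $k$) and instead bound $\|\tilde p_a - p_a\|_1$ directly by $\|\hat p_a - p_a\|_1$, giving the usual $O(\sqrt{k/(nw_a)})$ multinomial rate, plus the $L^1$ Laplace noise $\sum_j |L_{a,j}|/\tilde w_a$ concentrating at $O(k/(nw_a\varepsilon))$ by Bernstein, plus renormalization error. The main obstacle will be cleanly handling the interaction between the $L^\infty$ isotonic step on $\widecheck F_a$ and the $L^1$ norm on the resulting PMF $\tilde p_a$, since the projection is optimal in sup-norm on CDFs but the target inequality is $L^1$ on PMFs. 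A careful nonexpansion argument (the $L^\infty$ isotonic projection is $1$-Lipschitz onto a convex cone containing the true CDF increments), combined with conditioning on the good event for $\tilde w_a$ throughout so that the denominator contributes only lower-order distortion, should yield the stated rates without additional logarithmic slack.
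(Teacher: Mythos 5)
Your approach to the $L^\infty$ and KS bounds is a genuinely different route than the paper's, and arguably cleaner. The paper's workhorse is \cref{lem:2}, a hands-on pointwise analysis of the noise-then-isotonic-then-clipping pipeline on a nonnegative vector: the crucial step (\cref{eq:pmf.3}) observes that any violating pair $\ell \le m$ has gap $G_\ell - G_m \le -\sum_{j=\ell+1}^m \xi_j$ precisely because the true increments are nonnegative, so the isotonic residuals are controlled by partial sums of the noise alone. You replace this with the feasibility argument: since the true conditional CDF $F_a$ is monotone and $[0,1]$-valued, it is feasible, so $\|\widetilde F_a - \widecheck F_a\|_\infty \le \|F_a - \widecheck F_a\|_\infty$ and hence $\|\widetilde F_a - F_a\|_\infty \le 2\|\widecheck F_a - F_a\|_\infty$. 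This is the classic projection trick and gives the KS and $\ell_\infty$ bounds in two lines. One small caveat: \cref{ln:dist.5} of \cref{alg:post.proc} first computes Stout's closed-form $L^\infty$-isotonic solution, then pointwise-clips to $[0,1]$, then hard-sets $\widetilde F_a(v_k) = 1$; that is a composition of projections, not the single $L^\infty$-optimal projection onto $\{$monotone, $[0,1]$-valued, ending at $1\}$, so your intermediate inequality does not hold verbatim. Since each of those extra operations is nonexpansive toward any $F$ that already satisfies the constraint (and $F_a$ does), the endpoint $\|\widetilde F_a - F_a\|_\infty \le 2\|\widecheck F_a - F_a\|_\infty$ survives; this is a presentational fix, not a real gap.

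The $L^1$ bound is where you have a genuine gap, one you partially flag yourself. Your proposed escape --- that ``the $L^\infty$ isotonic projection is $1$-Lipschitz onto a convex cone containing the true CDF increments'' --- does not hold in the norm you need: $L^\infty$ isotonic regression is nonexpansive in $\|\cdot\|_\infty$ on CDFs, not in the total variation of the induced PMF, and the projection can relocate a noise bump across many bins, incurring $\ell_1$ distortion that is not controlled by $\|\widecheck F_a - F_a\|_\infty$ alone without the $\sqrt{k}$ inflation you are trying to avoid. The paper does not appeal to nonexpansion here at all; it bounds $\|x - y\|_1$ (the $\ell_1$ cost of the isotonic step) directly in \cref{lem:2} by summing the pointwise residual bound, which is where the nonnegativity of the true bin masses does the real work, and treats the $[0,1]$/endpoint clipping as a separate additive piece contributing $O(|s-t| + b\sqrt{k}\log(k/\beta))$. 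To make your $L^1$ argument rigorous you would need to supply an analogue of that pointwise residual estimate; the generic ``careful nonexpansion argument'' will not yield it. For $|\tilde w_a - w_a|$ your decomposition into $|\hat w_a - w_a|$ plus the clipped sum of $k$ Laplace noises is the same idea as the paper's (the paper in fact writes $w_a$ where $\hat w_a$ is meant, so your version is the more careful one), but note that the stated bound carries only the Laplace term, so the sampling term you add via Hoeffding either must be absorbed or is an extra term relative to the stated rate; this is worth reconciling explicitly.
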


The private estimation of PMFs with the Laplace mechanism has been analyzed in prior work~\citep{diakonikolas2015DifferentiallyPrivateLearning,vadhan2017ComplexityDifferentialPrivacy}, but our algorithm performs an extra re-normalization (lines~\ref{ln:dist.4}--\ref{ln:dist.6}) after adding Laplace noise to ensure that the PMF returned is valid.  This makes the analysis of \cref{thm:private.estimates} more involved, because the noise added to each bin can interact during re-normalization.

The rate of $\widetilde O(\sqrt{\nicefrac{k}{n}}+\nicefrac{k}{n\varepsilon})$ for TV distance is consistent with existing results~\citep{diakonikolas2015DifferentiallyPrivateLearning}.  For KS distance, the rate is improved by a $\sqrt k$ factor, which is not surprising as KS is a weaker metric than TV\@.  We note that our analysis in attaining this improvement is made easier with the $L^\infty$ isotonic regression CDF re-normalization scheme, because KS distance is also defined via the CDF\@.

\paragraph{Steps~\ref{it:step.2} and~\ref{it:step.3} \textnormal{(Compute Barycenter and Optimal Transports)}.}  
With the output distributions $\tilde p_a$ estimated privately, we now compute their barycenter and the optimal transports to obtain the fair post-processing mappings $g_a$, i.e., solving $\Prob(\{\tilde p_a\}_{a\in\calA}, \{\tilde w_a\}_{a\in\calA},\alpha)$.  Since the $\tilde p_a$'s are distributions supported on $v$, by restricting the support of the barycenter to $v$,\footnote{\label{fn:restrict}This approximation reduces the size of the barycenter problem to polynomial, at $O(\nicefrac{t-s}{k})$ error, the same as that from discretizing the outputs.} the barycenter and the optimal transports can be computed by solving a linear program with $O(k^2|\calA|)$ variables and constraints (cf.\ $\Prob$ and \cref{eq:w2}):
\begin{flalign}
  \quad\hypertarget{eq:lp}{\mathrm{LP}}(\{p_a\}_{a\in\calA},\{w_a\}_{a\in\calA}, \alpha):
  && \!\argmin_{\substack{\{\pi_a\}_{a\in\calA}\geq0\\q, \{q_a\}_{a\in\calA}\geq0}}\;\;  &\sum_{\mathclap{a\in\calA}}\;\, w_a  \sum_{j,\ell\in[k]} (v_j-v_\ell)^2 \,  \pi_a(v_j,v_\ell) \hspace{-0.8em} \\
    && \centermathcell{\mathrm{s.t.}}\;\;
    & \sum_{\mathclap{\ell\in[k]}} \; \pi_a(v_j,v_\ell) = p_a(v_j), && \forall a\in\calA,\, j\in[k],\\
    &&& \sum_{\mathclap{j\in[k]}} \;\pi_a(v_j,v_\ell) = q_a(v_\ell), && \forall a\in\calA,\, \ell\in[k], \\
    &&& \envert*{\sum_{j\leq \ell} \rbr*{ q_a(v_j) - q(v_j) } }\leq \frac{\alpha}{2}, && \forall a\in\calA,\, \ell\in[k].&&&
\end{flalign}
The constraints enforce that the $\pi_a$'s are couplings, and the target output distributions $\tilde q_a$'s are valid PMFs satisfying the fairness constraint of $\max_{a,a'}\dKS(\tilde q_a,\tilde q_{a'})\leq\alpha$.

\paragraph{Post-Processed Fair Regressor.}
On \cref{ln.post.2,ln.post.3,ln.post.4}, the (randomized) optimal transports $g_a$ from $\tilde p_a$ to $\tilde q_a$ are extracted by reading off from the optimal couplings $\pi_a$ (represented by $k\times k$ matrices), and are used to construct the fair regressor, $\bar f(x,a)= g_a\circ h\circ f(x,a)$.

Given an input $(x,a)$, the fair regressor $\bar f$ obtained from \cref{alg:post.proc} calls $f$ to make a prediction $y=f(x,a)$, discretizes it to get $\tilde y=h(y)$, and then uses the optimal transport $g_a$ of the respective group to post-process and return a fair prediction, $\bar y=g_a(\tilde y)$.  

\subsection{Statistical Analysis}
\label{sec:thm}
\Cref{alg:post.proc} is $\varepsilon$-DP by the post-processing theorem of DP~\citep[Proposition 2.1]{dwork2014AlgorithmicFoundationsDifferential}, because its output depends on $S$ only via the private statistics $\tilde p$ on line~\ref{ln:dist.2} that is $\varepsilon$-DP\@.\footref{fn:laplace}

We analyze the suboptimality (or \textit{fair excess risk}) of the post-processed fair regressor and provide fairness guarantee:

\begin{theorem}\label{thm:main}
  Let regressor $f$ be given along with samples $S=\{(x_i,a_i)\}_{i\in[n]}$ of $\mu_{X,A}$. Denote $L= t-s$, assume $L\leq1$, and $Y,f(X,A)\in[s,t]$ almost surely. Let $\bar f$ denote the fair regressor returned from \cref{alg:post.proc} on $(f,S,[s,t],k,\alpha,\varepsilon)$. Then for all $n\geq \Omega(\max_a\nicefrac1{w_a}\ln\nicefrac{|\calA|}\beta)$, with probability at least $1-\beta$,
    \begin{align}
  R(\bar f) - R(\bar f^*) 
    \leq O\rbr*{ \sqrt{\frac{k|\calA|}{n}\ln\frac{k|\calA|}\beta}  +  \frac{k|\calA|}{n\varepsilon}\ln\frac{k|\calA|}{\beta} }  + \frac{8}{k} + 5\E\sbr*{\envert*{f(X,A) - f^*(X,A)}}
\end{align}
  where $\bar f^*=\argmin_{f':\DSP(f')\leq\alpha}R(f')$ is the optimal fair regressor, $f^*$ is the (unconstrained) Bayes regressor, and
  \begin{align}
    &\DSP(\bar f) \leq \alpha +\max_{a\in\calA} O\rbr*{ \frac{\sqrt{k}}{n w_a\varepsilon}\ln\frac{k|\calA|}{\beta} +  \sqrt{\frac{k}{nw_a}\ln\frac{k|\calA|}\beta}  }.
  \end{align}
\end{theorem}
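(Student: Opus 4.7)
Let $\bar r_a$ denote the true conditional output distribution of $\bar f$ given $A=a$, and $\tilde q_a$ the LP target distribution for group $a$; by LP feasibility $\dKS(\tilde q_a,\tilde q_{a'})\le\alpha$, so by the KS triangle inequality $\DSP(\bar f)\le\alpha+2\max_a\dKS(\bar r_a,\tilde q_a)$. Since $\bar r_a$ and $\tilde q_a$ are push-forwards of $p_a$ and $\tilde p_a$ through the same stochastic kernel $g_a$, we have $F_{\bar r_a}(t)-F_{\tilde q_a}(t)=\sum_j(p_a(v_j)-\tilde p_a(v_j))\,\Pr(g_a(v_j)\le t)$. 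Choosing the monotone 1D coupling (valid for the squared cost) makes $j\mapsto\Pr(g_a(v_j)\le t)$ non-increasing in $j$; an Abel summation then uses $\sum_j(p_a(v_j)-\tilde p_a(v_j))=0$ to yield $\dKS(\bar r_a,\tilde q_a)\le\dKS(p_a,\tilde p_a)$, and the KS bound of Theorem~\ref{thm:private.estimates} finishes the fairness guarantee.

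\textbf{Excess risk.} Introduce the intermediate regressor $\tilde f(x,a):=g_a\circ h\circ f^*(x,a)$ (our learned post-processing applied to the Bayes regressor) and write $R(\bar f)-R(\bar f^*)=[R(\bar f)-R(\tilde f)]+[R(\tilde f)-R(\bar f^*)]$. For the first difference, the identity $(a-y)^2-(b-y)^2=(a-b)(a+b-2y)$ together with $|\bar f|,|\tilde f|,|Y|\le L\le 1$ gives $|R(\bar f)-R(\tilde f)|\le 2L\,\E|g_a(h(f))-g_a(h(f^*))|$; a triangle-inequality chain through $h(f),f,f^*,h(f^*)$ upper bounds the residual by $\E|f-f^*|$ plus two discretization gaps $\le L/(2k)$ and two transport-displacement terms, which are absorbed against the second decomposition via the LP optimality. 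For the second difference, Theorem~\ref{thm:opt.fair.equiv} writes $R(\bar f^*)-R(f^*)$ as the value of $\Prob(\{r^*_a\},\{w_a\},\alpha)$, while $R(\tilde f)-R(f^*)$ is bounded by the value of $\LProb(\{\tilde p_a\},\{\tilde w_a\},\alpha)$ plus $O(1/k)$ from discretization. The gap between these two LP values is controlled by an $L^1$-stability argument: perturbing the marginals $\{r^*_a\}\to\{\tilde p_a\}$ changes the LP value by $O(L^2\sum_a w_a\|r^*_a-\tilde p_a\|_1)$ (constructing feasible perturbations of the $\pi_a$'s that preserve the approximate-fairness constraint), with $\|r^*_a-\tilde p_a\|_1$ bounded via Theorem~\ref{thm:private.estimates} together with the deterministic estimate $\|r^*_a-p_a\|_1=O(\E[|f-f^*|\mid A=a]+1/k)$ obtained from comparing the discretized output distributions of $f$ and $f^*$.

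\textbf{Main obstacles.} The two principal technical difficulties are (i) the $L^1$-stability of the barycenter LP---the approximate-fairness KS-ball constraint couples the per-group transports, so a feasible perturbation of the $\pi_a$'s must simultaneously preserve both the marginal and fairness constraints while keeping the squared-cost change within $O(L^2)$ of the marginal perturbation; and (ii) absorbing the transport-displacement terms that arise in the excess-risk triangle chain against the second decomposition, so that the coefficient on $\E|f-f^*|$ is an absolute constant rather than $k$-dependent, which exploits the optimality of the LP and the monotone-coupling structure of the 1D optimal transport. The fairness half is comparatively clean, relying only on the monotone coupling and the Abel summation trick.
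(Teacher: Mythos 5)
Your fairness argument is sound and essentially the paper's: the triangle inequality through the LP targets, the push-forward identity for the CDF difference, and the monotonicity of the 1D squared-cost transport (the paper's \cref{lem:transport.property}) followed by a partial-summation step, finished with the KS bound of \cref{thm:private.estimates}. Your Abel-summation variant even gives the slightly cleaner bound $\dKS(\bar r_a,\tilde q_a)\leq\dKS(p_a,\tilde p_a)$ where the paper keeps an extra $\|p_a-\tilde p_a\|_\infty$ term of the same order.

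The excess-risk half, however, has a genuine gap. Your route through the intermediate regressor $\tilde f=g_a\circ h\circ f^*$ and an $L^1$(TV)-stability bound for the barycenter LP hinges on the claimed deterministic estimate $\|r^*_a-p_a\|_1=O(\E[|f-f^*|\mid A=a]+\nicefrac1k)$, and this is false: total variation between output distributions is not controlled by the $L^1$ distance between the regressors. If $f^*(\cdot,a)$ places all its mass just below a bin boundary and $f=f^*+\delta$ pushes it across, then $\E|f-f^*|=\delta$ is arbitrarily small while $\|p^*_a-p_a\|_1=2$; no discretization term rescues this. This is exactly why the paper routes the comparison through Wasserstein-1 rather than TV: the joint coupling of $(f(X,a),f^*(X,a))$ gives $W_1(r_a,r^*_a)\leq\E[|f(X,a)-f^*(X,a)|\mid A=a]$, and the stability of squared $W_2$ with respect to $W_1$ perturbations of the marginals (\cref{lem:w22.diff}, together with $W_1\leq 2R\,\dTV$ only for the \emph{estimation} errors $\|p_a-\tilde p_a\|_1$ from \cref{thm:private.estimates}) yields the $\E|f-f^*|$ dependence. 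The same issue resurfaces in your bound $R(\tilde f)-R(f^*)\leq\LProb$-value$+O(\nicefrac1k)$: since $g_a$ is the optimal transport out of $\tilde p_a$, its expected squared displacement under $p^*_a$ differs from the LP value by up to $L^2\|p^*_a-\tilde p_a\|_1$, which again requires the unavailable TV control. Finally, in the first difference you bound $|R(\bar f)-R(\tilde f)|$ via $\E|g_a(h(f))-g_a(h(f^*))|$ and propose to absorb the resulting ``transport-displacement'' residuals against LP optimality; but $g_a$ is not Lipschitz (adjacent bins can be transported far apart when $\tilde q_a$ has gaps), and those residuals are of the order of the fairness adjustment $W_1(\tilde p_a,\tilde q_a)$ itself rather than an error term, so the claimed absorption is not substantiated. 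The paper avoids this by never comparing $\bar f$ to a post-processed $f^*$ pointwise: it bounds $\ER(\bar f)$ by $\sum_a\tilde w_aW_2^2(\tilde p_a,\tilde q_a)$ plus discretization and estimation errors, invokes LP/\Prob{} optimality to pass to $W_2^2(\tilde p_a,q^*_a)$, and only then compares to $\ER(\bar f^*)=\sum_aw_aW_2^2(r^*_a,q^*_a)$ (\cref{thm:opt.fair.equiv}) via \cref{lem:w22.diff}. To repair your argument you would need to replace both TV-based steps with $W_1$-based ones, at which point it collapses into the paper's proof.
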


The risk bound reflects four potential sources of error: the first term is finite sample estimation error, the second term is due to the noise added for $\varepsilon$-DP, the third term is the error introduced by discretization, and the last term is the $L^1$ excess risk of the regressor $f$ being post-processed (carrying over the error of the pre-trained regressor). 

The first three terms of the risk bound (and last two in the fairness bound) are attributed to the accuracy of the private distribution estimate using HDE (cf.~\cref{thm:private.estimates}). In particular, a trade-off between the statistical bias and variance is incurred by the choice of $k$, the number of bins:
\begin{equation}
  \widetilde O\Bigg( \underbrace{\sqrt{\frac{k}{n}} + \frac{k}{n\varepsilon}}_{\textrm{variance}} + \underbrace{\frac{1}{k}}_{\textrm{bias}} \Bigg).
\end{equation}
Using too few bins leads to a large discretization error (statistical bias), whereas using too many suffers from large variance due to data sampling and the noise added for privacy.

The cost of privacy is dominated by the estimation error as long as $n\gtrsim \max_a\nicefrac{k}{w_a\varepsilon^2}\geq \nicefrac{k|\calA|}{\varepsilon^2}$.  In which case, the choice of $k=\widetilde\Theta(n^{1/3})$ is optimal for MSE, which is consistent with classical non-parametric estimation results of HDE~\citep{rudemo1982EmpiricalChoiceHistograms}.

The fairness bound, on the other hand, only contains variance terms but not the statistical bias, therefore using fewer bins is always more favorable for fairness at the expense of MSE (e.g., in the extreme case of $k=1$, the post-processor outputs a constant value, which is exactly fair).  This suggests that when $n$ is small,  $k$ can be decreased to reduce variance for higher levels of fairness.

The optimal choice of $k$ (in combination with $\alpha$) is data-dependent, and is typically tuned on a validation split, however; extra care should be taken as this practice could, in principle, violate differential privacy. In our experiments, we sweep $\alpha$ and $k$ to empirically explore the trade-offs between error, privacy and fairness attainable with our \cref{alg:post.proc}.  
This is common practice in differential privacy research~\citep{mohapatra2022RoleAdaptiveOptimizers}.
Regarding the selection of hyperparameters while preserving privacy, we refer readers to~\citet{liu2019PrivateSelectionPrivate,papernot2022HyperparameterTuningRenyi}.

\section{Experiments}
\label{sec:exp}

In this section, we evaluate the private and fair post-processing algorithm described in \cref{alg:post.proc}.

We do not compare to other algorithms because we are not aware of any existing private algorithms for learning fair regressors.  Although it may be possible to adopt some existing algorithms to this setting, e.g., using DP-SGD as in~\citep{tran2021DifferentiallyPrivateFair}, making them practical and competitive requires care, and is hence left to future work.  Our post-processing algorithm is based on~\citep{chzhen2020FairRegressionWasserstein}; their paper is referred to for empirical comparisons to in-processing algorithms under the non-private setting ($\varepsilon=\infty$).

\paragraph{Setup.}
Since the excess risk can be decomposed according to the pre-training and post-processing stages (\cref{thm:main}, where $\E[|f-f^*|]$ is carried over from the suboptimality of the regressor $f$ trained in the pre-training stage),
we will simplify our experiment setup to isolate and focus on the performance and trade-offs induced by post-processing alone.  This means we will access the ground-truth responses and directly apply \cref{alg:post.proc} (i.e., $X=Y$ and $f=\mathrm{Id}$, whereby $\E[|f-f^*|]=0$).\footnote{\label{fn:code}Code is available at \url{https://github.com/rxian/fair-regression}.}

\paragraph{Datasets.}
The datasets are randomly split 70-30 for training (i.e., post-processing) and testing.

\textit{Communities \& Crime~\textnormal{\citep{redmond2002DatadrivenSoftwareTool}}.}~~It contains the socioeconomic and crime data of communities in US, and the task is to predict the rate of violent crimes per 100k population ($Y\in[0,1]$).  The sensitive attribute is an indicator for whether the community has a significant presence of a minority population ($|\calA|=2$).  The total size of the dataset is 1,994, and the number of training examples for the smallest group is 679 ($\approx nw_a$).

\textit{Law School~\textnormal{\citep{wightman1998LSACNationalLongitudinal}}.}~~This dataset contains the academic performance of law school applicants, and the task is to predict the student's undergraduate GPA ($Y\in[1,4]$).  The sensitive attribute is race ($|\calA|=4$), and the dataset has size 21,983.  The smallest group has 628 training examples.

\subsection{Results}\label{sec:res}

\begin{figure}[t]
\centering
    \includegraphics[width=0.97\linewidth]{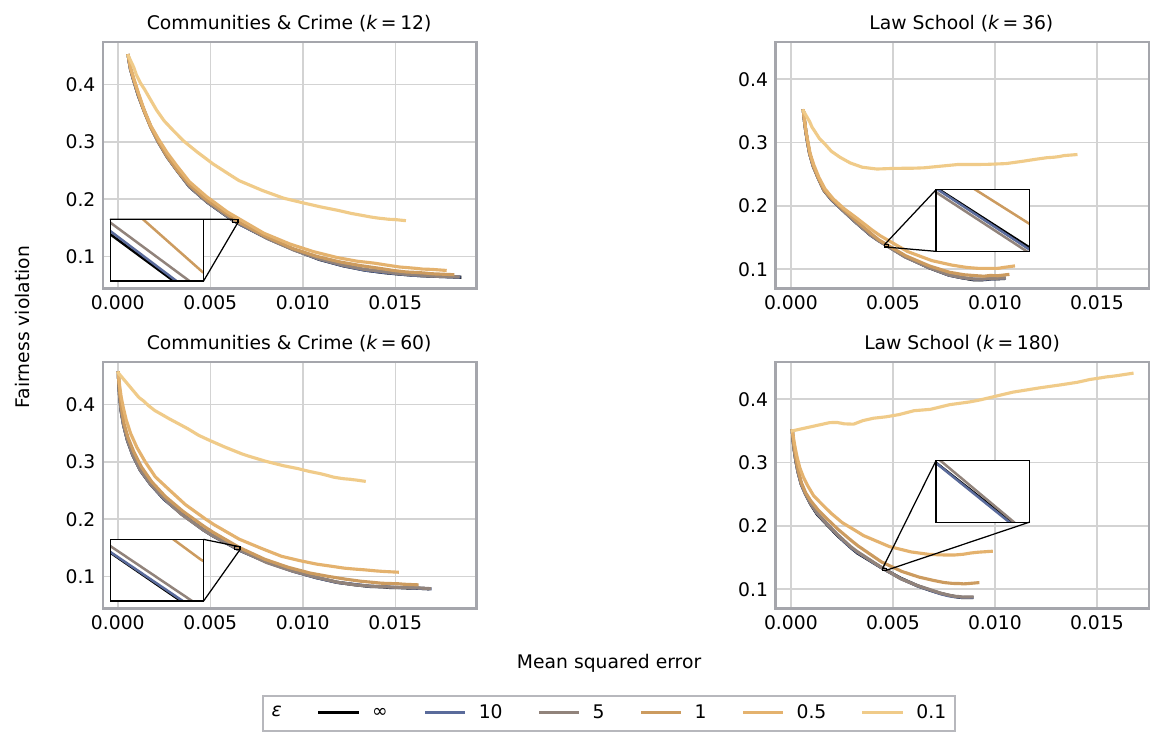}
    \caption{Error-privacy-fairness trade-offs achieved by \cref{alg:post.proc} by sweeping $\alpha$ under the indicated number of bins $k$, with different privacy budgets $\varepsilon$. Fairness violation is measured in KS distance as defined in \cref{def:fair} ($\DSP$). Average of 50 random seeds (also for \cref{fig:results.sweep,fig:results.tradeoff}).}
    \label{fig:results.approximate}
\end{figure}

In \cref{fig:results.approximate}, we show the MSE and fairness trade-offs (in KS distance; $\DSP$, \cref{def:fair}) attained with \cref{alg:post.proc} under various settings of $\alpha$.  The main observations are:

\begin{enumerate}
    \item The cost of discretization (indicated by the horizontal distance from $0$ to the top-left starting points of the curves) can be expected to be insignificant compared to fairness, unless the model is already very fair without post-processing.

    \item Although the amount of data available for post-processing is small by modern standards, the results are very insensitive to the privacy budget until the highest levels of DP are demanded or  very large $k$ is used.
    
    This is because according to \cref{thm:main}, the error attributed to DP noise is dominated by estimation error when $n\gtrsim \max_a\nicefrac{k}{w_a\varepsilon^2}$.  Note that in our experiments, we have $n\gg\nicefrac{k|\calA|}{\varepsilon^2}$ on both datasets except for $\varepsilon=0.1$, and $\varepsilon=0.5$ when $k=180$.  Using more bins increases the cost of privacy due to variance, as we will show in \cref{fig:results.sweep}.  
    
    \item The right end of each line is obtained with setting $\alpha=0$.  Relaxing it to larger values could give better trade-offs, especially when $n,\varepsilon$ are small, because in these cases the estimated distributions can be inaccurate due to estimation error and the noise added, so aiming for exact SP may fit to data artifacts or the noise rather than the true signal, causing MSE to increase without actual improvements to fairness.

\end{enumerate}

\begin{figure}[t]
    \centering
    \begin{minipage}[t]{0.48\linewidth}
        \centering
        \includegraphics[width=1\linewidth]{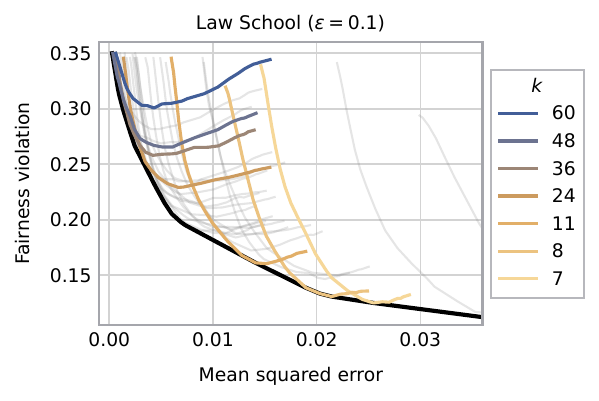}
        \caption{Error-fairness trade-offs achieved by \cref{alg:post.proc} on the Law School dataset by sweeping $\alpha$ and $k$ for $\varepsilon=0.1$. The black line is the lower envelope, and ends on the right at $(0.6772,0)$ (outside the cropped figure).}
        \label{fig:results.sweep}
    \end{minipage}%
    \hfill
    \begin{minipage}[t]{0.48\linewidth}
        \centering
        \includegraphics[width=1\linewidth]{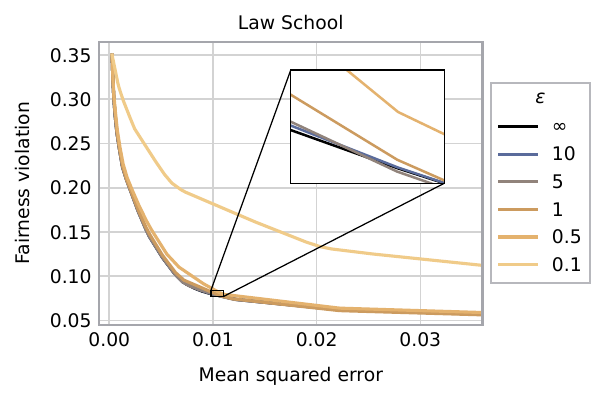}
        \caption{Error-privacy-fairness trade-offs achieved by \cref{alg:post.proc} on the Law School dataset by sweeping $\alpha$ and $k$ and taking the lower envelope.  The line for $\varepsilon=0.1$ is the black line in \cref{fig:results.sweep}.  All lines meet and end on the right at $(0.6772,0)$.}
        \label{fig:results.tradeoff}
\end{minipage}%
\end{figure}

\paragraph{Trade-off Between Statistical Bias and Variance.}

Recall from the analysis and discussion in \cref{sec:thm} that the MSE of \cref{alg:post.proc} exhibits a trade-off between the statistical bias and variance from the choice of $k$: using more bins lowers discretization error but suffers from larger estimation error and more noise added for DP, and vice versa.  On the other hand, the fairness only depends on the variance.  Hence, one can decrease $k$ to achieve higher levels of fairness at the expense of MSE\@.

In \cref{fig:results.sweep}, we plot the trade-offs on the Law School dataset for $\varepsilon=0.1$ under different settings of $k$.  On the right extreme, exact SP ($\DSP=0$) is achieved by setting $k=1$, but with a high $0.6772$ MSE\@.  As expected:
\begin{enumerate}
    \item Smaller settings of $k$ lead to smaller $\DSP$, but it comes with higher discretization error, as reflected in the rightward-shifting starting points (i.e., results with $\alpha=\infty$, i.e., not post-processed for fairness).  
    \item Using more bins may result in worse trade-offs compared to less bins due to large variance. Note that the trade-offs with $k=60$ bins are almost completely dominated by those with $k=36$ bins.
    \item The general trend is that, to achieve the best trade-offs (i.e., the lower envelope), use smaller $(\alpha,k)$ when aiming for higher levels of fairness, and larger $(\alpha,k)$ for smaller MSE (but less fairness).

\end{enumerate}

\paragraph{Error-Privacy-Fairness Trade-Off.}

The black line in \cref{fig:results.sweep} shows the optimal error-fairness trade-offs attainable with \cref{alg:post.proc} for $\varepsilon=0.1$ from sweeping $\alpha$ and $k$ and taking the lower envelope (segments of the line not reached by any $k$ can be achieved by combining two regressors via randomization, although obtaining them via post-processing requires double the privacy budget).  

We repeat this experiment for all settings of $\varepsilon$ on the Law School dataset, and plot their lower envelopes in \cref{fig:results.tradeoff}.  This illustrates the Pareto front of the error-privacy-fairness trade-offs achieved by \cref{alg:post.proc}.  Demanding stricter privacy degrades the trade-offs between error and fairness.

Lastly, we remark that while \cref{fig:results.approximate,fig:results.sweep,fig:results.tradeoff} illustrate the trade-offs that can be possibly attained with our post-processing algorithm from varying the hyperparameters $\alpha$ and $k$, selecting the desired trade-off requires tuning them (privately) on a validation set.  Readers are referred to~\citep{liu2019PrivateSelectionPrivate,papernot2022HyperparameterTuningRenyi} on the topic of differentially private hyperparameter tuning.

\section{Conclusion and Future Work}
In this paper, we described and analyzed a private post-processing algorithm for learning attribute-aware fair regressors, in which privacy is achieved by performing histogram density estimates of the distributions privately, and fairness by computing the optimal transports to their Wasserstein barycenter.  We evaluated the error-privacy-fairness trade-offs attained by our algorithm on two datasets.

Although we only studied the attribute-aware setting, that is, we have explicit access to $A$ during training and when making predictions, our post-processing algorithm could be extended to the attribute-blind setting, where $A$ is only available in the training data.  This requires training an extra predictor for the sensitive attribute, $\widehat P_A = \widehat \Pr(A\mid X)\in\Delta^{|\calA|-1}$, to estimate the joint distribution of $(\widehat Y,\widehat P_A)$ (vs.\ the joint of $(\widehat Y,A)$ estimated in \cref{alg:post.proc} for the attribute-aware setting), and modifying $\Prob$ (and $\LProb$) to use predicted group membership $\widehat P_A$ rather than the true $A$.  This has a higher sample complexity, and the fairness of the post-processed regressor will additionally depend on the accuracy of $\widehat P_A$.  We leave the implementation and analysis of this extension to future work.

\section*{Acknowledgements}

GK is supported by a Canada CIFAR AI Grant, an NSERC Discovery Grant, and an unrestricted gift from Google. HZ is partially supported by a research grant from the Amazon-Illinois Center on AI for Interactive Conversational Experiences (AICE) and a Google Research Scholar Award.

\section*{Broader Impacts}

This paper continues the study of privacy and fairness in machine learning, and fills the gap in prior work on private and fair regression.  Since the setting is well-established, and we have theoretically analyzed the risk of our algorithm, we do not find outstanding societal consequences for discussion.

\bibliographystyle{plainnat-eprint}
\bibliography{references}

\newpage
\appendix

\section{Excess Risk of the Optimal Fair Regressor}\label{app.opt}

This section proves \cref{thm:opt.fair.equiv}, that the excess risk of the optimal attribute-aware fair regressor can be expressed as the sum of Wasserstein distances from the output distributions $r^*_a$ of the Bayes regressor $f^*$ conditioned on each group $a\in\calA$ to their barycenter $q$.  We will cover the approximate fairness setting using the same analysis in~\citep{xian2023FairOptimalClassification}.

The result is a direct consequence of \cref{lem:fair.equiv}---but before stating which, because the fair regressor is randomized, to make the discussions involving randomized functions rigorous, we provide a formal definition for them with the Markov kernel.

\begin{definition}[Markov Kernel]\label{def:markov.kernel}
  A Markov kernel from a measurable space $(\calX,\calS)$ to $(\calY,\calT)$ is a mapping $\calK:\calX\times \calT\rightarrow[0,1]$, such that $\calK(\cdot,T)$ is $\calS$-measurable, $\forall T\in\calT$, and $\calK(x,\cdot)$ is a probability measure on $(\calY,\calT)$, $\forall x\in\calX$.
\end{definition}

\begin{definition}[Randomized Function]\label{def:rand.fn}
  A randomized function $f:(\calX,\calS)\rightarrow(\calY,\calT)$ is associated with a Markov kernel $\calK:\calX\times \calT \rightarrow[0,1]$, such that $\forall x\in\calX, T\in\calT$, $\Pr(f(x)\in T) = \calK(x,T)$.
\end{definition}

\begin{definition}[Push-Forward Distribution]\label{def:push.forward}
    Let $p$ be a measure on $(\calX,\calS)$ and $f:(\calX,\calS)\rightarrow(\calY,\calT)$ a randomized function with Markov kernel $\calK$. The push-forward of $p$ under $f$, denoted by $f\sharp p$, is a measure on $\calY$ given by $f\sharp  p(T) =  \int_{\calX} \calK(x,T)  \dif p(x)$, $\forall T\in\calT$.
\end{definition}

Now, we state the lemma of which \cref{thm:opt.fair.equiv} is a direct consequence; it says that given any (randomized) regressor $f$ with a particular shape $q$, one can derive a regressor $g\circ f^*$ from the Bayes regressor $f^*$ that has the same shape and excess risk ($g$ is a randomized function with Markov kernel $\calK(y,T)=\nicefrac{\pi(y,T)}{\pi(y,\RR)}$ where $\pi$ is given in \cref{eq:coupling.equiv}):

\begin{lemma}\label{lem:fair.equiv}
  Let a regression problem be given by a joint distribution $\mu$ of $(X,Y)$, denote the Bayes regressor by $f^*:\calX\rightarrow\RR$ and $r^*= f^*\sharp\mu_X$, and let $q$ be an arbitrary distribution on $\RR$.  Then, for any randomized regressor $f$ with Markov kernel $\calK$ satisfying $f\sharp\mu_X=q$,
  \begin{equation}
    \pi(y^*,y) = \int_{{f^*}^{-1}(y^*)} \calK(x,y)\dif\mu_X(x)\label{eq:coupling.equiv}
  \end{equation}
  (where ${f^*}^{-1}(y^*)\coloneqq\{x\in\calX:f^*(x)=y^*\}$) is a coupling $\pi\in\Pi(r^*,q)$ that satisfies
  \begin{equation}
    \ER(f) = \int (y^*-y)^2 \dif \pi(y^*,y). \label{eq:fair.equiv}
  \end{equation}
  Conversely, for any $\pi\in\Pi(r^*,q)$, the randomized regressor $f$ with Markov kernel
  \begin{equation}
    \calK(x,T) = \frac{\pi(f^*(x),T)}{\pi(f^*(x),\RR)}
  \end{equation}
  satisfies $f\sharp\mu_X=q$ and \cref{eq:fair.equiv}.
\end{lemma}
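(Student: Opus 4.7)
Both directions reduce to Fubini together with the change-of-variables formula for push-forwards; the bulk of the work is verifying marginals (for $\pi$) or the push-forward $f\sharp\mu_X$ (for $f$), and unfolding expectations against $\calK$ and $\mu_X$. Assume first that $f$ has Markov kernel $\calK$ with $f\sharp\mu_X=q$, and let $\pi$ be defined by \cref{eq:coupling.equiv}. To see $\pi\in\Pi(r^*,q)$, I would compute both marginals directly: $\pi(B,\RR)=\int_{{f^*}^{-1}(B)}\calK(x,\RR)\dif\mu_X(x)=\mu_X({f^*}^{-1}(B))=r^*(B)$ using $\calK(x,\RR)=1$ and $r^*=f^*\sharp\mu_X$, and $\pi(\RR,T)=\int_\calX\calK(x,T)\dif\mu_X(x)=f\sharp\mu_X(T)=q(T)$. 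For the risk identity, the orthogonality principle $\ER(f)=\E[(f(X)-f^*(X))^2]$ continues to hold for randomized $f$ provided its external randomness is independent of $Y$ given $X$ (the cross-term $\E[(f(X)-f^*(X))(f^*(X)-Y)]$ vanishes upon conditioning on $X$ and using $f^*(X)=\E[Y\mid X]$). Expanding as $\int_\calX\int_\RR(y-f^*(x))^2\calK(x,\dif y)\dif\mu_X(x)$ and applying the change-of-variables rule $\int g(y^*,y)\dif\pi(y^*,y)=\int_\calX\int_\RR g(f^*(x),y)\calK(x,\dif y)\dif\mu_X(x)$ (verified on rectangles $A\times B$ and extended by monotone convergence) with $g(y^*,y)=(y^*-y)^2$ then gives \cref{eq:fair.equiv}.

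\textbf{Converse direction.} Given $\pi\in\Pi(r^*,q)$, I would interpret $\pi(y^*,\cdot)$ as the regular conditional of $\pi$ against its first coordinate, which exists as a Markov kernel on $\RR$ by the disintegration theorem on the Polish space $\RR$, satisfying $\pi(A,B)=\int_A\pi(y^*,B)\dif r^*(y^*)$ and $\pi(y^*,\RR)=1$ for $r^*$-a.e.\ $y^*$. The kernel $\calK(x,T)=\pi(f^*(x),T)/\pi(f^*(x),\RR)$ is then Markov, and the forward computations run in reverse: $f\sharp\mu_X(T)=\int_\calX\pi(f^*(x),T)\dif\mu_X(x)=\int_\RR\pi(y^*,T)\dif r^*(y^*)=\pi(\RR,T)=q(T)$ using $r^*=f^*\sharp\mu_X$ and disintegration, and \cref{eq:fair.equiv} follows from the same change-of-variables computation.

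\textbf{Main obstacle.} The forward direction is essentially routine integral manipulation. The delicate step is interpreting $\pi(f^*(x),T)$ in the converse: the regular conditional $\pi(y^*,\cdot)$ is only defined for $r^*$-a.e.\ $y^*$, so I would use $r^*=f^*\sharp\mu_X$ to argue that the exceptional set pulls back to a $\mu_X$-null set in $\calX$, and verify joint measurability of $(x,T)\mapsto\calK(x,T)$ as a composition of measurable maps. For the discrete distributions actually used in \cref{alg:post.proc}, this subtlety disappears entirely: $\pi(y^*,T)$ reads off as the atom mass $\pi(\{y^*\}\times T)$ and $\pi(\{y^*\},\RR)=r^*(\{y^*\})$ is a strictly positive normalizer whenever $y^*$ lies in the support, so the normalization in the denominator does the work explicitly.
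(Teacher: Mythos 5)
Your proof is correct and follows essentially the same route as the paper's: verify the two marginals of $\pi$, use the orthogonality principle to write $\ER(f)=\E[(f^*(X)-f(X))^2]$, and express this as $\int(y^*-y)^2\dif\pi$ via the kernel, then in the converse build $\calK$ from $\pi$ and run the same computations backwards to recover $f\sharp\mu_X=q$ and \cref{eq:fair.equiv}. The only differences are presentational: you formalize with disintegration/regular conditionals and a rectangles-plus-monotone-convergence change of variables where the paper manipulates $\pi(y^*,y)$ and $\Pr(f^*(X)=y^*,f(X)=y)$ informally as densities, and the paper's converse first verifies that the constructed kernel satisfies \cref{eq:coupling.equiv} (from which both conclusions follow by the forward argument) rather than checking the two conclusions directly---an immaterial structural difference.
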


\begin{proof}
  For the first direction, note that
  \begin{align}
    \ER(f) 
    &= \E\sbr{\rbr{f^*(X)-f(X)}^2} \\
    &= \int_{\RR\times\RR} (y^*-y)^2 \Pr( f^*(X) = y^*, f(X)=y )\dif(y^*,y) \\
    &= \int_{\RR\times\RR} (y^*-y)^2 \rbr*{\int_\calX \Pr( f^*(X) = y^*, f(X)=y, X=x )\dif x}\dif(y^*,y) \\
    &= \int_{\RR\times\RR} (y^*-y)^2 \rbr*{\int_{{f^*}^{-1}(y^*)} \Pr( f(X)=y, X=x )\dif x}\dif(y^*,y) \\
    &= \int_{\RR\times\RR} (y^*-y)^2 \rbr*{\int_{{f^*}^{-1}(y^*)} \Pr( f(X)=y\mid X=x )\dif \mu_X(x)}\dif(y^*,y) \\
    &= \int_{\RR\times\RR} (y^*-y)^2 \pi(y^*,y) \dif(y^*,y) \label{eq:fair.equiv.er}
  \end{align}
  as desired, where line 4 is because $\Pr( f^*(X) = y^*, f(X)=y, X=x ) = \1[f^*(x)=y^*]\Pr( f(X)=y, X=x )$ as $f^*$ is deterministic.  We also verify that the constructed $\pi$ is a coupling:
  \begin{align}
    \int_\RR\pi(y^*,y)\dif y 
    &= \int_\RR \int_{{f^*}^{-1}(y^*)} \calK(x,y)\dif\mu_X(x)\dif y \\
    &= \int_{{f^*}^{-1}(y^*)} \int_\RR \calK(x,y) \dif y \dif\mu_X(x) \\
    &= \int_{{f^*}^{-1}(y^*)} \dif\mu_X(x)\\
    &= \Pr(f^*(X)=y^*) \\
    &= r^*(y^*)
  \end{align}
  by \cref{def:markov.kernel,def:push.forward}, and
  \begin{align}
    \int_\RR\pi(y^*,y)\dif y^*
    &= \int_\RR \int_{{f^*}^{-1}(y^*)} \calK(x,y)\dif\mu_X(x)\dif y^* \\
    &= \int_\calX \calK(x,y) \dif\mu_X(x) \\
    &= \int_\calX \Pr(f(X)=y\mid X=x) \dif\mu_X(x)\\
    &= \Pr(f(X)=y) \\
    &= q(y)
  \end{align}
  by \cref{def:rand.fn} and the assumption that $f\sharp\mu_X=q$.

For the converse direction, it suffices to show that the Markov kernel constructed for $f$ satisfies the equality in \cref{eq:coupling.equiv}, which would immediately imply $f\sharp\mu_X=q$, and \cref{eq:fair.equiv} with the same arguments in \cref{eq:fair.equiv.er}.  Let $y,y\in\RR$ and $z\in {f^*}^{-1}(y^*)$ be arbitrary, then
\begin{align}
  \pi(y^*,y) 
  &= \frac{\pi(y^*,y)}{\pi(y^*,\RR)}\pi(y^*,\RR) \\
  &= \frac{\pi(f^*(z),y)}{\pi(f^*(z),\RR)}\pi(y^*,\RR) \\
  &=\calK(z,y)\pi(y^*,\RR) \\
  &=\calK(z,y)r^*(y^*) \\
  &=\calK(z,y)\int_{{f^*}^{-1}(y^*)}\dif\mu_X(x) \\
  &=\int_{{f^*}^{-1}(y^*)} \calK(z,y) \dif\mu_X(x) \\
  &=\int_{{f^*}^{-1}(y^*)} \calK(x,y) \dif\mu_X(x),
\end{align}
where line 3 is by construction of $\calK$, line 4 by the assumption that $\pi\in\Pi(r^*,q)$, and the last line is because $\calK(x,y)=\calK(z,y)$ for all $x\in{f^*}^{-1}(y^*)$, also by construction.
\end{proof}

This lemma allows us to formulate the problem of finding the optimal regressor under a shape constraint $q$ as that of finding the optimal coupling $\pi\in\Pi(r^*,q)$ with the squared cost (and $\pi$ can be used to derive the regressor $g\circ f^*$ that achieves the minimum of the original problem).  Because statistical parity is a shape constraint on the regressors, we can leverage this lemma to prove \cref{thm:opt.fair.equiv}:

\begin{proof}[Proof of \cref{thm:opt.fair.equiv}]
  Because we are finding an attribute-aware fair regressor, $f:\calX\times\calA\rightarrow\RR$, we can optimize the components corresponding to each group independently, i.e., $f_a\coloneqq f(\cdot,a)$, $\forall a\in\calA$. Denote the excess risk conditioned on group $a$ by $\ER_a(f_a)=\E\sbr{(f_a(X)-Y)^2\mid A=a}$, and the marginal distribution of $X$ conditioned on $A=a$ by $\mu_{X\mid a}$, then
    \begin{align}
    \min_{f:\DSP(f)\leq\alpha}\ER(f)  
    &=\min_{f:\dKS(f_a\sharp\mu_{X\mid a},f_{a'}\sharp\mu_{X\mid a})\leq \alpha}\ER(f) \\
    &=  \min_{f:\dKS(r_{a}(f),r_{a'}(f))\leq \alpha}\sum_{a\in\calA} w_a \ER_a(f_a) \\
    &= \min_{\{f_a\}_{a\in\calA}:\dKS(f_a\sharp \mu_{X\mid a},f_{a'}\sharp \mu_{X\mid a'})\leq \alpha}\sum_{a\in\calA} w_a \ER_a(f_a) \\
    &= \min_{\{q_a\}_{a\in\calA}:\dKS(q_a,q_{a'})\leq \alpha}\sum_{a\in\calA} w_a \min_{f_a:f_a\sharp \mu_{X\mid a}=q_a} \ER_a(f_a) \\
    &= \min_{\substack{q:\supp(q)\subseteq\RR\\ \{q_a\}_{a\in\calA}\subset \BKS(q,\nicefrac\alpha2)}}\sum_{a\in\calA}w_a \min_{f_a:f_a\sharp \mu_{X\mid a}=q_a} \ER_a(f_a)
  \end{align}
  (the proof that $\dKS(\mu,\nu)\leq \alpha \iff \exists q \text{ s.t.\ } \mu,\nu\in \BKS(q,\nicefrac\alpha2)$ is omitted; a hint for the forward direction is that such a $q$ can be constructed by averaging the CDFs of $\mu,\nu$), where, by \cref{lem:fair.equiv} and the definition of Wasserstein distance,
  \begin{align}
    \min_{f_a:f_a\sharp \mu_{X\mid a}=q_a} \ER_a(f_a) = \min_{\pi_a\in\Pi(r^*_a, q_a)} \int (y^*-y)^2 \dif \pi_a(y^*,y) = W_2^2(r^*_a, q_a),
  \end{align}
  and the theorem follows by plugging this back into the previous equation.
\end{proof}

\section{Proofs for Section~\ref{sec:alg}}
\label{app.proof}

We analyze the $L^1$ sensitivity for our notion of neighboring datasets described in \cref{sec:prelim}, and provide the proofs to \cref{thm:private.estimates,thm:main}, in that order.

\begin{remark}\label{rem:sensitivity}
For nonempty neighboring datasets $S,S'$ that differ in at most one entry by insertion, deletion or substitution, the $L^1$ sensitivity to the empirical PMF is at most $\nicefrac{2}{n}$.  

Let $\hat p, \hat p'$ denote the empirical PMFs of $S$ and $S'$, respectively, assume w.l.o.g.\ that they have two coordinate, and the insertion/deletion takes place in the first coordinate.  Denote $n= |S|$, $n_1= n\hat p_1$, and $n_2= n\hat p_2$.
\begin{itemize}
    \item (Insertion).~~The sensitivity is
\begin{align}
\|\hat p-\hat p'\|_1&
=\envert*{\frac{n_1}{n}-\frac{n_1+1}{n+1}} + \envert*{\frac{n_2}{n} - \frac{n_2}{n+1}}\\
&=\envert*{\frac{n_1(n+1)-(n_1+1)n}{n(n+1)}}+\envert*{\frac{n_2(n+1)-n_2n}{n(n+1)}}\\
&=\envert*{\frac{n_1-n}{n(n+1)}}+\envert*{\frac{n_2}{n(n+1)}}\\
&=2\envert*{\frac{n_2}{n(n+1)}}\\
&\leq \frac{2}{n}.
    \end{align}
    \item (Deletion).~~Similarly, $\|\hat p-\hat p'\|_1=|\nicefrac{n_1}{n}-\nicefrac{n_1-1}{n-1}| + |\nicefrac{n_2}{n} - \nicefrac{n_2}{n-1}|=2|\nicefrac{n_2}{n(n-1)}|\leq \nicefrac{2}{n}$, because $n_2\leq n-1$.
    \item (Substitution).~~$\|\hat p-\hat p'\|_1=|\nicefrac{n_1}{n}-\nicefrac{n_1-1}{n}| + |\nicefrac{n_2}{n} - \nicefrac{n_2+1}{n}|=|\nicefrac{1}{n}| + |\nicefrac{1}{n}|=\nicefrac{2}{n}$.
\end{itemize}
\end{remark}

For the proofs of the theorems, several technical results are required.  First, a concentration bound of i.i.d.\ sum of Laplace random variables based on the following fact~\citep{li2023TailBoundsSums}, which is due to $\sqrt{2Y_j}Z\sim\mathrm{Laplace}(0,1)$ and $\sum_{j=1}^k a_j Z_j\sim \calN(0,\sum_{j=1}^k a^2_j)$ for $Z_1,\dots,Z_k\sim\calN(0,1)$ and $a_1,\dots,a_k\geq0$:

\begin{proposition}\label{prop:laplace.dist}
  Let $X_1,\dots,X_k\sim\mathrm{Laplace}(0,1)$, $Y_1,\dots,Y_k\sim\mathrm{Exponential}(1)$, and $Z\sim\calN(0,1)$ all be independent, then $\sum_{j=1}^k X_j$ has the same distribution as $\rbr{2\sum_{j=1}^k  Y_j}^{1/2}Z$.
\end{proposition}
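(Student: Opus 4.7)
The proposition is essentially the scale-mixture representation of the Laplace distribution. I would structure the proof in three short steps, following the parenthetical hint given just before the statement.

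\textbf{Step 1 (Single-coordinate identity).} First I would verify that if $Y\sim\mathrm{Exponential}(1)$ and $Z\sim\calN(0,1)$ are independent, then $W\coloneqq\sqrt{2Y}\,Z\sim\mathrm{Laplace}(0,1)$. The cleanest way is via the characteristic function: conditioning on $Y$ and using $\E[e^{itZ}\mid Y]=e^{-t^2/2}$ gives
\begin{equation}
\E[e^{itW}]=\E\!\left[\E[e^{it\sqrt{2Y}Z}\mid Y]\right]=\E\!\left[e^{-t^2 Y}\right]=\int_0^\infty e^{-y(1+t^2)}\dif y=\frac{1}{1+t^2},
\end{equation}
which is exactly the characteristic function of the standard Laplace. (Alternatively one can compute the density by integrating out $Y$.)

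\textbf{Step 2 (Conditional Gaussian of the sum).} Since Laplace is characterized by its distribution, Step~1 lets me take iid pairs $(Y_j,Z_j)_{j\in[k]}$ with $Y_j\sim\mathrm{Exponential}(1)$ and $Z_j\sim\calN(0,1)$ independent across and within $j$, and realize $X_j=\sqrt{2Y_j}\,Z_j\sim\mathrm{Laplace}(0,1)$ iid. Conditioning on $(Y_1,\ldots,Y_k)$, the sum $\sum_{j=1}^k X_j=\sum_{j=1}^k \sqrt{2Y_j}\,Z_j$ is a linear combination of independent standard normals with fixed coefficients $a_j=\sqrt{2Y_j}$, hence
\begin{equation}
\sum_{j=1}^k X_j\;\Big|\;Y_1,\ldots,Y_k \;\sim\; \calN\!\left(0,\;\sum_{j=1}^k 2Y_j\right).
\end{equation}

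\textbf{Step 3 (Marginalize).} Equivalently, conditional on $(Y_1,\ldots,Y_k)$, $\sum_j X_j$ has the same distribution as $\bigl(2\sum_j Y_j\bigr)^{1/2}Z$ for an independent $Z\sim\calN(0,1)$. Since this conditional distribution depends on $(Y_1,\ldots,Y_k)$ only through $\sum_j Y_j$ and matches that of $(2\sum_j Y_j)^{1/2}Z$ for \emph{every} realization, the unconditional distributions agree as well, yielding the claimed identity in law.

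\textbf{Where the difficulty sits.} There is no real obstacle here beyond recognizing the Laplace scale mixture in Step~1; once that is in hand, Steps~2 and~3 are a one-line conditioning argument. The only care needed is to keep the auxiliary $Z$ in the statement independent of the $Y_j$'s, which is automatic because one may use a fresh standard normal after marginalizing away the $Z_j$'s.
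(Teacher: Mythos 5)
Your proof is correct and follows exactly the route the paper indicates (and attributes to the cited tail-bound reference): represent each $X_j$ as $\sqrt{2Y_j}\,Z_j$, condition on the exponentials so the sum is Gaussian with variance $2\sum_j Y_j$, and marginalize. The characteristic-function verification in Step~1 and the conditioning argument in Steps~2--3 correctly fill in the details the paper leaves as a parenthetical remark.
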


\begin{lemma}\label{cor:laplace}
  Let independent $X_1,\dots,X_k\sim\mathrm{Laplace}(0,1)$, then for all $t\geq0$, with probability at least $1-\beta$, $\envert{\sum_{j=1}^kX_j}\leq 2\sqrt{k}\ln\nicefrac{2k}\beta$.
\end{lemma}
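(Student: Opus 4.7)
}
My approach leverages the distributional identity from the immediately preceding \cref{prop:laplace.dist}: writing $S := \sum_{j=1}^k X_j$, we have $S \stackrel{d}{=} \sqrt{2W}\,Z$, where $W := \sum_{j=1}^k Y_j$ is Gamma-distributed with shape $k$ and rate $1$, and $Z \sim \calN(0,1)$ is independent of $W$. This reduces the tail bound on $|S|$ to two separate one-dimensional tail bounds---one on the Gamma factor $W$ and one on the Gaussian factor $|Z|$---which can then be combined by a union bound.

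For the Gamma factor, I would use a Chernoff-style argument: since $\E[e^{sW}] = (1-s)^{-k}$ for $s < 1$, taking $s = 1/2$ yields $\E[e^{W/2}] = 2^k$, and Markov's inequality gives $\Pr(W \geq t) \leq 2^k e^{-t/2}$. Setting the right-hand side equal to $\beta/2$ and solving produces a high-probability upper bound of the form $W \leq 2k\ln 2 + 2\ln(2/\beta)$. For the Gaussian factor, the standard tail $\Pr(|Z| > u) \leq 2 e^{-u^2/2}$ gives $|Z| \leq \sqrt{2\ln(4/\beta)}$ with probability at least $1-\beta/2$.

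On the intersection of these two events (of total probability at least $1-\beta$), $|S| = \sqrt{2W}\,|Z|$ is controlled by the product of the two individual bounds, which is of order $\sqrt{k\ln(1/\beta)} + \ln(1/\beta)$. The remaining step is purely algebraic cleanup: using $\ln 2, \ln(2/\beta), \ln(4/\beta) \leq \ln(2k/\beta)$ (valid for $k\geq 1$ and $\beta \leq 1$) together with $\sqrt{x} \leq x$ for $x \geq 1$, the product simplifies to a bound of the form $C\sqrt{k}\ln(2k/\beta)$. The main ``obstacle'' is really just pinning down the constant $C = 2$ as stated; since the target bound is substantially looser than the sharp sub-exponential (Bernstein) bound, there is ample slack to absorb constants, either by carefully optimizing the allocation of failure probability between the two events or by directly applying a Chernoff bound to $|S|$ using the Laplace MGF $(1-s^2)^{-k}$. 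No delicate case splitting (e.g., sub-Gaussian vs.\ sub-exponential regimes) is needed.
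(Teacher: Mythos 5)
Your proposal follows the same skeleton as the paper's proof: both invoke \cref{prop:laplace.dist} to write $\sum_j X_j \stackrel{d}{=} \sqrt{2W}\,Z$ with $W=\sum_j Y_j$, tail-bound the two factors separately, and combine with a union bound. The only real difference is how $W$ is controlled: you apply a Chernoff/MGF bound to the Gamma variable, giving $W \leq 2k\ln 2 + 2\ln(2/\beta)$, whereas the paper uses the cruder per-coordinate bound $\Pr(W\geq t)\leq k\Pr(Y_1\geq t/k)=k e^{-t/k}$, giving $W\leq k\ln(2k/\beta)$. Your bound is tighter for large $k$, but the paper's looser bound is the one from which the stated form falls out immediately, since $\sqrt{2k\ln(2k/\beta)}\cdot\sqrt{2\ln(4/\beta)}\leq 2\sqrt{k}\ln(2k/\beta)$ once $\ln(4/\beta)\leq\ln(2k/\beta)$; with your bound the naive product only yields roughly $4\sqrt{k}\ln(2k/\beta)$, and your ``ample slack'' claim is a bit optimistic at the edges: for $k\in\{1,2\}$ with $\beta$ not small, no reallocation of failure probability between the two events (nor a direct Chernoff bound on $S$) certifies the constant $2$, and one should instead fall back on the exact Laplace tail (e.g.\ $k=1$: $|X_1|\leq\ln(1/\beta)\leq 2\ln(2/\beta)$). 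This is a cosmetic matter rather than a genuine gap---the paper's own argument is comparably loose at $k=1$, and the lemma is only consumed inside $O(\cdot)$ bounds downstream---but if you want the constant exactly as stated you should either adopt the paper's $k\ln(2k/\beta)$ bound on $W$ or treat the small-$k$ cases separately.
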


\begin{proof}
  Using \cref{prop:laplace.dist}, we bound $\sum_{j=1}^kX_j$ by analyzing $\sqrt{\sum_{j=1}^k Y_j}\envert{ Z}$.
  For all $t\geq0$, 
  \begin{align}
    \Pr\rbr*{\sum_{j=1}^k Y_j \geq t} &\leq \Pr\rbr*{\exists j \text{ s.t.\ } Y_j \geq \frac tk} 
    \leq k \Pr\rbr*{Y_1 \geq \frac tk}
    \leq k \exp\rbr*{-\frac tk}.
  \end{align}
  On the other hand, the Chernoff bound implies that $\Pr(|Z|\geq t)\leq 2\exp(-\nicefrac{t^2}{2})$.  With a union bound, with probability at least $1-\beta$,
\begin{align}
  \envert*{ \sqrt{2\sum_{j=1}^k Y_j}Z} &= \sqrt{2\sum_{j=1}^k Y_j}\envert{ Z}  \leq 2\sqrt{k}\ln\frac{2k}\beta. \qedhere
\end{align}
\end{proof}

Next, an $L^1$ (TV) convergence result of empirical distributions with finite support, which follows from the concentration of i.i.d.\ sum of Multinoulli random variables:

\begin{theorem}[\citealp{weissman2003InequalitiesL1Deviation}]
Let $p\in\Delta^{d-1}$ the $(d-1)$ simplex and $\hat p_n\sim\nicefrac1n\mathrm{Multinomial}(n,p)$, then with probability at least $1-\beta$, $\|p-\hat p_n\|_1\leq \sqrt{{\nicefrac{2d}n\ln\nicefrac2\beta}}$.
\end{theorem}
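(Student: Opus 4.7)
The plan is to use the variational characterization of the $L^1$ distance on a finite alphabet, combined with Hoeffding's inequality at each sign pattern and a union bound. The starting point is the identity
\[
  \|p - \hat p_n\|_1 \;=\; \max_{\sigma \in \{-1,+1\}^d}\, \sum_{i=1}^d \sigma_i\,(p_i - \hat p_{n,i}),
\]
which reduces the vector-valued deviation to controlling the maximum of $2^d$ scalar mean-zero sums. This is the key structural step: it converts a somewhat rigid multinomial concentration question into a family of standard bounded-difference sums indexed by $\sigma$.

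For each fixed $\sigma$, let $X_1,\ldots,X_n$ denote the i.i.d.\ samples from $p$ underlying $\hat p_n$, and set $Z_j^{(\sigma)} = \mathbb{E}[\sigma_{X_1}] - \sigma_{X_j}$. A direct calculation shows $\sum_{i=1}^d \sigma_i(p_i - \hat p_{n,i}) = \tfrac{1}{n}\sum_{j=1}^n Z_j^{(\sigma)}$, where the $Z_j^{(\sigma)}$ are i.i.d., mean zero, and lie in $[-2,2]$ since $\sigma_{X_j}\in\{-1,+1\}$ and its mean lies in $[-1,1]$. Hoeffding's inequality then gives
\[
  \Pr\Bigl(\sum_{i=1}^d \sigma_i\,(p_i - \hat p_{n,i}) \;\geq\; \epsilon \Bigr) \;\leq\; \exp\!\bigl(-n\epsilon^2/2\bigr),
\]
and analogously for the lower tail (or, equivalently, by running the argument with $-\sigma$).

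Taking a union bound over all $2^d$ sign vectors produces $\Pr(\|p - \hat p_n\|_1 \geq \epsilon) \leq 2^d \exp(-n\epsilon^2/2)$. Setting the right-hand side equal to $\beta$ and solving gives $\epsilon = \sqrt{(2/n)(d\ln 2 + \ln(1/\beta))}$, from which the stated bound follows by the elementary inequality $d\ln 2 + \ln(1/\beta) \leq d\ln(2/\beta)$, valid for $d\geq 1$ and $\beta\in(0,1]$. The main quantitative obstacle is the delicate balance between the $2^d$ union bound and the sub-Gaussian tail at each $\sigma$: the argument works only because each increment $Z_j^{(\sigma)}$ is bounded by a universal constant independent of the alphabet size $d$, so Hoeffding's variance proxy does not itself inflate with $d$, and the exponential union cost only contributes an additive $d\ln 2$ inside the radical. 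One small refinement worth noting is that $\sigma = \pm(1,\ldots,1)$ always contribute zero (because $p$ and $\hat p_n$ both sum to one), so the union bound could be restricted to $2^d - 2$ sign patterns, but this changes only constants and not the form of the bound.
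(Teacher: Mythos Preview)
Your argument is correct and is essentially the standard proof from \citet{weissman2003InequalitiesL1Deviation}; the paper itself does not supply a proof but simply cites that reference, so there is nothing to compare against beyond the original source.

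One small point of exposition: when you write that the $Z_j^{(\sigma)}$ ``lie in $[-2,2]$'' and then invoke Hoeffding to obtain $\exp(-n\epsilon^2/2)$, note that a range of width $4$ would only yield $\exp(-n\epsilon^2/8)$. The tighter bound you state is still correct, but it relies on the sharper observation that for a \emph{fixed} $\sigma$ the quantity $\sigma_{X_j}$ takes values in $\{-1,+1\}$ (width $2$), so $Z_j^{(\sigma)} = c - \sigma_{X_j}$ has range exactly $2$, not $4$. You should make this explicit so the constant in the Hoeffding step is justified.
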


\begin{lemma}\label{lem:tv}
Let independent $x_{1},\dots,x_{n}\sim p$ with finite support $\calX$, and denote the empirical distribution by $\hat{p}_{n}=\nicefrac{1}{n}\sum_{i=1}^{n}\beta_{x_{i}}$, then with probability at least $1-\beta$, $\|p-\hat
p_n\|_1\leq\sqrt{\nicefrac{2|\calX|}n\ln\nicefrac 2\beta}$.
\end{lemma}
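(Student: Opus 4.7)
The plan is to derive Lemma~\ref{lem:tv} as an immediate corollary of the Weissman inequality stated directly above it. The only thing that needs checking is that the empirical measure $\hat p_n = \nicefrac1n \sum_{i=1}^n \beta_{x_i}$ formed from $n$ i.i.d.\ samples of $p$ indeed has the distribution $\nicefrac1n\mathrm{Multinomial}(n,p)$ required by that theorem; once this is in place the stated bound transfers verbatim with $d = |\calX|$.

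To execute this, I would first enumerate the finite support as $\calX = \{c_1,\dots,c_d\}$ with $d = |\calX|$ and identify the measure $p$ with the vector $(p(\{c_1\}),\dots,p(\{c_d\})) \in \Delta^{d-1}$. The count vector $(N_1,\dots,N_d)$ defined by $N_i = \sum_{j=1}^n \1[x_j = c_i]$ is then, by definition, $\mathrm{Multinomial}(n,p)$-distributed, since it records the outcomes of $n$ independent categorical draws from $p$. Under the same identification the empirical measure $\hat p_n$ corresponds to $\nicefrac1n(N_1,\dots,N_d)$, and because the support is finite, the $L^1$ distance $\|p - \hat p_n\|_1$ taken between the measures coincides with the $\ell^1$ distance between the associated simplex vectors.

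Applying the Weissman bound to $\nicefrac1n(N_1,\dots,N_d)$ would then yield $\|p - \hat p_n\|_1 \leq \sqrt{\nicefrac{2|\calX|}{n}\ln\nicefrac{2}{\beta}}$ with probability at least $1-\beta$, exactly as claimed. I do not anticipate any real obstacle: the lemma is just a rephrasing of Weissman's inequality in i.i.d.\ sampling language, evidently included as a convenient form for the downstream applications in the proofs of Theorems~\ref{thm:private.estimates} and~\ref{thm:main}.
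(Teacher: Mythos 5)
Your proposal is correct and matches the paper's treatment: the paper presents the lemma as an immediate restatement of the quoted Weissman et al.\ inequality, relying on exactly the identification of the empirical measure with $\nicefrac1n\,\mathrm{Multinomial}(n,p)$ that you spell out. No gap here.
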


For the proof of \cref{thm:private.estimates}, we need two technical results:

\begin{lemma}\label{lem:1}
Let independent $x_1,\dots,x_n\sim p$ supported on $[k]$, and denote the empirical PMF by $\hat p_j=\nicefrac1n\sum_{i}\1[x_i=j]$, for all $j\in[k]$.  Let $E\subseteq[k]$ be a subset of size $\ell$, denote $w_E= \Pr(x\in E)$, the PMF conditioned on the event $x\in E$ by $p_{\mid E}$, and its empirical counterpart by $\hat p_{j\mid E}=\nicefrac1{n_E}\sum_{i}\1[x_i=j]$, where $n_E=\sum_{i}\1[x_j\in E]$.  Then for all $n\geq \nicefrac8{w_E}\ln\nicefrac8\beta$, with probability at least $1-\beta$,
  \begin{align}
    \enVert{p_{\mid E}-\hat p_{\mid E}}_\infty &\leq \sqrt{\frac{1}{nw_E}\ln\frac{8\ell}\beta}, \\
    \dTV(p_{\mid E},\hat p_{\mid E}) &=\frac12\enVert{p_{\mid E}-\hat p_{\mid E}}_1 \leq \sqrt{\frac {\ell}{4nw_E}\ln\frac {8}\beta}, \\
    \dKS(p_{\mid E},\hat p_{\mid E}) &\leq \sqrt{\frac{1}{nw_E}\ln\frac{8\ell}\beta}.
  \end{align}
\end{lemma}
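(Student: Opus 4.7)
The plan is to reduce each of the three inequalities to a standard concentration bound for $m = n_E$ i.i.d.\ draws from the conditional distribution $p_{\mid E}$, while first handling the randomness in $m$ itself. First I would control $n_E \sim \mathrm{Binomial}(n, w_E)$ by a multiplicative Chernoff bound, $\Pr(n_E < nw_E/2) \leq \exp(-nw_E/8)$. The sample-size hypothesis $n \geq (8/w_E)\ln(8/\beta)$ makes this tail at most $\beta/8$, so on a high-probability event I may assume $m \geq nw_E/2$ and substitute this lower bound at the very end of each inequality.

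Conditional on $n_E = m$, and on the identity of the $m$ indices that fall in $E$, the corresponding sample values are i.i.d.\ from $p_{\mid E}$ by the standard fact that the in-subset entries of a multinomial form an independent multinomial over that subset with the rescaled probabilities. Hence $\hat p_{\mid E}$ is the empirical distribution of $m$ i.i.d.\ draws from $p_{\mid E}$, and the three claims follow from off-the-shelf concentration. For the $L^\infty$ bound I would apply Hoeffding coordinate-wise (noting that $m\hat p_{j\mid E}$ is a sum of Bernoullis) and union-bound over the $\ell$ coordinates of $E$. For the $L^1$/TV bound I would directly invoke \cref{lem:tv} on the conditional empirical distribution. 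For the KS bound I would use the DKW inequality, which yields $\dKS(\hat p_{\mid E}, p_{\mid E}) \leq \sqrt{\ln(2/\beta')/(2m)}$ with probability at least $1-\beta'$.

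Finally I would union-bound over the Chernoff event and the three conditional concentration events, splitting the $\beta$ budget across them (for instance, $\beta/8$ for Chernoff and roughly $\beta/4$ for each of the remaining three, which leaves the slack needed to arrive at the constants stated), and substitute $m \geq nw_E/2$ into each conditional expression to obtain the three inequalities as claimed.

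As for the main difficulty, there is no substantive obstacle: the argument is careful bookkeeping of constants, union bounds, and textbook tail inequalities. The only subtlety worth flagging is that $m$ is itself random, so the reduction must be stated precisely---formally, the joint law of $(n_E, \hat p_{\mid E})$ factors as ``first draw $n_E$ from its binomial, then draw $n_E$ i.i.d.\ samples from $p_{\mid E}$ to form $\hat p_{\mid E}$''---after which the coordinate-wise, $L^1$, and KS concentration bounds apply directly on the favorable event $\{m \geq nw_E/2\}$.
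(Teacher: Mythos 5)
Your proposal is correct and takes essentially the same route as the paper: Chernoff on $n_E$ to get $n_E \geq n w_E / 2$ with failure probability $\leq \beta/8$ under the stated sample-size hypothesis, then exchangeability to view the in-$E$ samples as $n_E$ i.i.d.\ draws from $p_{\mid E}$, then coordinate-wise Hoeffding plus a union bound over $E$ for the $\ell_\infty$ inequality, then \cref{lem:tv} for the TV inequality, and a final union bound over the events with $\beta$ split across them. The one place you diverge is the KS bound: you invoke DKW, whereas the paper reuses the Hoeffding-plus-union-bound trick applied to the CDF at each point of $E$. Both are valid; DKW is arguably the cleaner choice here since it removes the $\ell$ from inside the logarithm (you would get $\sqrt{\ln(8/\beta)/(n w_E)}$ rather than $\sqrt{\ln(8\ell/\beta)/(n w_E)}$), which is a strictly stronger bound and therefore still implies the lemma as stated. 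The paper's coordinate-wise route has the minor advantage of not requiring a separate named inequality and of visibly mirroring the $\ell_\infty$ argument, but it implicitly relies on $E$ being order-contiguous so that $\Pr_{p_{\mid E}}(X \leq j) = \Pr(X \leq j)/w_E$ for $j \in E$; DKW applies with no such caveat.
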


\begin{proof}
By Chernoff bound on the Binomial distribution, for all $n\geq \nicefrac8{w_E}\ln\nicefrac2\beta$, with probability at least $1-\beta$,
\begin{equation}
  \frac{nw_E}{2}\leq n_E\leq2nw_E. \label{eq:emp.pmf.1}
\end{equation}

Order the samples so that the ones with $x_i\in E$ are at the front (or, consider first sample $n_E$, then sample the first $n_E$ samples from $p_{\mid E}$).  Conditioned on \cref{eq:emp.pmf.1}, by Hoeffding's inequality and a union bound, for all $j\in E$, with probability at least $1-\beta$,
  \begin{align}
    \envert{p_{j\mid E}-\hat p_{j\mid E}} 
    &= \envert*{\frac 1{w_E}\Pr(X= j) - \frac1{n_E}\sum_{i=1}^{n_E} \1[x_i= j]} \\
    &= \frac 1{w_E} \envert*{\Pr(X= j) - \frac{1}{n_E}\sum_{i=1}^{n_E} w_E \1[x_i= j]} \\
    &\leq \frac1{w_E} \sqrt{\frac{w_E^2}{2n_E}\ln\frac{2\ell}\beta} \\
    &\leq \sqrt{\frac{1}{nw_E}\ln\frac{2\ell}\beta}.
  \end{align}

  Next, by \cref{lem:tv}, with probability at least $1-\beta$,
  \begin{align}
    \dTV(p_{\mid E},\hat p_{\mid E}) &= \frac12\sum_{j\in E}\envert*{p_{j \mid E}-\hat p_{j\mid E}} \\
    &=  \frac12\sum_{j\in E} \envert*{ \frac1{w_E} P(X=j)  - \frac{1}{n_E}\sum_{i=1}^{n_E} \1[x_i=j] } \\
    &\leq  \frac12 \sqrt{\frac {\ell}{2n_E}\ln\frac {2}\beta} \\
    &\leq \sqrt{\frac {\ell}{4nw_E}\ln\frac {2}\beta}.
  \end{align}

  Lastly, $\dKS$ computes the $L^\infty$-distance between two CDFs, so similar to the $\ell_\infty$ bound, by Hoeffding's inequality and a union bound, with probability at least $1-\beta$,
  \begin{align}
    \dKS(p_{\mid E},\hat p_{\mid E}) 
    &= \max_{j\in E} \envert*{\frac1{w_E}\Pr(X\leq j) - \frac1{n_E}\sum_{i=1}^{n_E} \1[x_i\leq j]} \\
    &= \frac 1{w_E} \envert*{\Pr(X\leq j) - \frac{1}{n_E}\sum_{i=1}^{n_E} w_E \1[x_i\leq j]} \\
    &\leq \frac1{w_E} \sqrt{\frac{w_E^2}{2n_E}\ln\frac{2\ell}\beta} \\
    &\leq \sqrt{\frac{1}{nw_E}\ln\frac{2\ell}\beta}.
  \end{align}
  
  The result follows by taking a final union bound over the four events during the analysis and rescaling $\beta\gets \beta/4$.
\end{proof}

\begin{lemma}\label{lem:2}
  Let constants $a_1,\dots,a_k\geq0$, and independent $\xi_1,\dots,\xi_k\sim\mathrm{Laplace}(0,b)$. Denote $S_j= \sum_{\ell=1}^j a_j$, $s=S_k$, and let $t\geq0$.  Define for all $j\in[k]$,
  \begin{flalign}
  &\text{(add noise)}& x_j\coloneqq{}& a_j + \xi_j,& F_j={}&\sum_{k=1}^j x_k, & \\
  &\text{(isotonic regression)}&y_j={}& G_j - G_{j-1},& G_j\coloneqq{}& \frac12\rbr*{ F_{l_j}+F_{r_j}}, \\
  &\text{(clipping)}& z_j={}&H_j - H_{j-1}, & H_j\coloneqq{}&\begin{cases}
    \proj_{[0,t]}G_j & \text{if } j<k \\
    t & \text{else},
  \end{cases}
  \end{flalign}
  where
  \begin{equation}
    (l_j,r_j)=\argmax_{l\leq j\leq r}( F_{l}- F_{r}).
  \end{equation}
  Then with probability at least $1-\beta$,
  \begin{align}
    \enVert{a- z}_1 &\leq 3\envert{s-t} + 74bk\ln\frac{4k}\beta, \\
    \enVert{a- z}_\infty &\leq 2\envert{s-t} + 32b\sqrt {k}\ln\frac{4k}\beta, \\
    \enVert{S- H}_\infty &\leq \envert{s-t} + 12b\sqrt {k}\ln\frac{4k}\beta.
  \end{align}
\end{lemma}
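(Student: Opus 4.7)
The plan is to analyze the three layers of the algorithm---accumulation of Laplace noise, $L^\infty$ isotonic regression, and clipping with terminal pinning---by reducing each bound to the maximum oscillation $\|T\|_\infty$ of the Laplace random walk $T_j := \sum_{i=1}^j \xi_i$ and the sum of absolute noise $\|\xi\|_1$. First I would apply \cref{cor:laplace} to each prefix sum $T_j$ and union-bound over $j \in [k]$ to get $\|T\|_\infty \leq O(b\sqrt{k}\ln(k/\beta))$ with probability $\geq 1-\beta/2$, and separately use a tail bound on $|\xi_j| \sim \mathrm{Exp}(1/b)$ together with a union bound to get $\|\xi\|_1 \leq O(bk\ln(k/\beta))$ with probability $\geq 1-\beta/2$, conditioning on both throughout.

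Since $a_j \geq 0$ implies $S$ is non-decreasing, $S$ is a feasible competitor for the $L^\infty$ isotonic regression of $F$; optimality yields $\|F - G\|_\infty \leq \|F - S\|_\infty = \|T\|_\infty$, and triangle inequality gives $\|G - S\|_\infty \leq 2\|T\|_\infty$. For the clipping/terminal step I would case-split: the projection $H_j = \mathrm{proj}_{[0,t]} G_j$ is a contraction toward any $S_j \in [0,t]$, and in the residual case $S_j > t$ (only possible when $s > t$) we absorb an $|s-t|$ slack; combined with the terminal equality $|S_k - H_k| = |s-t|$ this yields the third bound $\|S - H\|_\infty \leq |s-t| + 2\|T\|_\infty$. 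The $\ell_\infty$ bound on $a - z$ then follows from $a_j - z_j = (S-H)_j - (S-H)_{j-1}$, so $|a_j - z_j| \leq 2\|S-H\|_\infty$.

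The main obstacle is the $\ell_1$ bound, since a naive $\|a - z\|_1 \leq k\|a - z\|_\infty$ overshoots by a $\sqrt{k}$ factor. The plan is to route through the noise: $a = x - \xi$ gives $\|a - z\|_1 \leq \|\xi\|_1 + \sum_j |x_j - z_j| = \|\xi\|_1 + \mathrm{TV}(F - H)$, and $\mathrm{TV}(F - H) \leq \mathrm{TV}(F - G) + \mathrm{TV}(G - H)$. For $\mathrm{TV}(G - H)$, I would use that $G_j - H_j = (G_j - t)_+$ for $j<k$ is itself non-decreasing (since $G$ is), followed by at most one terminal jump of magnitude $|G_k - t| \leq |s-t| + \|T\|_\infty$ at $j = k$, yielding $\mathrm{TV}(G - H) \leq |s-t| + \|T\|_\infty$. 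The crucial step is bounding $\mathrm{TV}(F - G) = \sum_j |x_j - y_j|$: using the closed form $G_j = \tfrac12(\max_{l \leq j} F_l + \min_{r \geq j} F_r)$, one writes $y_j = \tfrac12[(F_j - \max_{l < j} F_l)_+ + (\min_{r \geq j} F_r - F_{j-1})_+]$ and verifies each summand lies in $[0, x_j^+]$, so $y_j \in [0, x_j^+]$. This constrains $(y_j - x_j)_+$ to vanish when $x_j \geq 0$, and to equal $-x_j \leq |\xi_j|$ when $x_j < 0$ (using $a_j \geq 0$). Combining this with the signed identity $\sum_j (y_j - x_j) = G_k - F_k \geq 0$ gives $\mathrm{TV}(F - G) \leq 2\sum_j (y_j - x_j)_+ \leq 2\|\xi\|_1$. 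Assembling everything yields $\|a - z\|_1 \leq 3\|\xi\|_1 + |s-t| + \|T\|_\infty$, which matches the claimed $O(|s-t| + bk\ln(4k/\beta))$ up to absolute constants.
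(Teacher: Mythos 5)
Your proposal is correct, and while it shares the paper's overall skeleton (the same three-stage decomposition noise/isotonic/clipping, \cref{cor:laplace} for prefix sums of the noise, and exponential tails for $\|\xi\|_1$), the deterministic core is genuinely different. The paper controls the isotonic step through the violating-pair inequality ($G_\ell-G_m\leq-\sum_{j=\ell+1}^m\xi_j$) and pointwise bounds on $|x_j-y_j|$, then treats clipping by explicitly analyzing the clipped prefix and suffix of $G$; you instead invoke optimality of the $L^\infty$ isotonic fit against the feasible competitor $S$ (giving $\|F-G\|_\infty\leq\|F-S\|_\infty=\|T\|_\infty$ directly), nonexpansiveness of $\proj_{[0,t]}$ plus the slack $|S_j-\proj_{[0,t]}S_j|\leq(s-t)_+$ for the clipping step, and---most notably---for the $L^1$ bound the containment $y_j\in[0,x_j^+]$ from the max--min closed form of $G_j$ combined with the telescoping sign identity $\sum_j(y_j-x_j)=G_k-F_k\geq0$, which yields $\sum_j|x_j-y_j|\leq2\|\xi\|_1$ deterministically. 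This buys you a cleaner $L^1$ argument: it avoids summing per-coordinate $O(b\sqrt{k}\ln(k/\beta))$ bounds over $j\in[k]$, which is exactly the delicate (and lossy) step in the paper's route, and it separates the probabilistic events from a purely deterministic stability analysis; the paper's route, in exchange, gives explicit per-coordinate and windowed control that it reuses for the $\|F-G\|_\infty$ bound. Two small repairs when writing this up: $G_j-H_j=(G_j-t)_+-(-G_j)_+$ rather than $(G_j-t)_+$ (the map $g\mapsto g-\proj_{[0,t]}(g)$ is still nondecreasing and the extra negative part is at most $\|F-G\|_\infty+\|T\|_\infty$, so your $\mathrm{TV}(G-H)$ bound survives with a slightly larger constant), and you should fix the boundary conventions $F_0=G_0=H_0=0$ (for $j=1$ one checks $y_1-x_1\leq0$ directly) before verifying that your constants fit under the stated $74$, $32$, $12$---they do, with room to spare.
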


\begin{proof}

Our analysis for proceeds by using the triangle inequality to decompose into and bounding each of the following terms (shown here for $\|\cdot\|_1$, analogously for $\|\cdot\|_\infty$ and the partial sums):
\begin{equation}\label{eq:pmf.steps}
  \enVert{a - z}_1\leq   \enVert{a - x}_1 +   \enVert{x - y}_1 + \enVert{y - z}_1.
\end{equation}

We will use the following concentration result of Laplace random variables: by \cref{cor:laplace}, with probability at least $1-\beta$, for all $0\leq \ell\leq m\leq k$,
\begin{equation}
  \envert*{  \sum_{j=\ell+1}^{m}\xi_j} \leq 4b\sqrt{m-\ell}\ln\frac{2(m-\ell)k^2}\beta \leq 12b\sqrt {m-\ell}\ln\frac{2k}\beta. \label{eq:sum.bounds}
\end{equation}
  
\paragraph{First Term in \cref{eq:pmf.steps}.}
  By the CDF of the exponential distribution (of which the Laplace distribution is the two-sided version), with a union bound, with probability at least $1-\beta$,
  \begin{equation}
    \envert{a_j-x_j} = \envert{\xi_j} \leq 2b\ln\frac k\beta,\quad\forall j\in[k],
  \end{equation}
  and it follows that
  \begin{equation}
    \enVert{a- x}_1 = \sum_{j=1}^k \envert{a_j-x_j} \leq 2bk\ln\frac k\beta.
  \end{equation}
  
  For the partial sums, by \cref{eq:sum.bounds},
  \begin{align}
    \|S-H\|_\infty = \max_j \envert*{S_j- F_j} = \max_j \envert*{\sum_{\ell=1}^j \xi_\ell} \leq \max_j 12b\sqrt {j}\ln\frac{2k}\beta =12b\sqrt {k}\ln\frac{2k}\beta.
  \end{align}

\paragraph{Second Term in \cref{eq:pmf.steps}.}
  
  Note that for any $\ell\leq m$ such that $F_{\ell} \geq F_{m}$ (i.e., a violating pair for isotonic regression),
  \begin{equation}
    G_\ell - G_m = F_\ell - F_\ell - \sum_{j=\ell+1}^m  a_\ell - \sum_{j=\ell+1}^m \xi_\ell  \leq -\sum_{j=\ell+1}^m \xi_\ell \leq 12b\sqrt {m-\ell}\ln\frac{2k}\beta \label{eq:pmf.3},
  \end{equation}
  because $a_j\geq 0$.
  So for all $j\in[k]$,
  \begin{align}
    0\leq \envert{x_j-y_j} 
    &=\envert*{ G_{j}- G_{j-1}- (F_{j}-F_{j-1}) } \\
    &\leq\envert*{ G_{j}-F_{j}} +\envert*{ G_{j-1}- F_{j-1} } \\
    &=   \begin{rcases}
          \begin{dcases}
            G_{j}-F_{j} & \text{if } G_{j} > F_{j} \\
            F_{j} - G_{j} &\text{else}
          \end{dcases}
        \end{rcases} + \begin{rcases}
          \begin{dcases}
            G_{j-1}-F_{j-1} & \text{if } G_{j-1} > F_{j-1} \\
            F_{j-1} - G_{j-1} &\text{else}
          \end{dcases}
        \end{rcases} \\
    &=   \begin{rcases}
          \begin{dcases}
            G_{j}-\frac{G_{l_j} + G_{r_j}}2 & \text{if } G_{j} > F_{j} \\
            \frac{G_{l_j} + G_{r_j}}2 - G_{j} &\text{else}
          \end{dcases}
        \end{rcases} +  \begin{rcases}
          \begin{dcases}
            G_{j-1}-\frac{G_{l_{j-1}} + G_{r_{j-1}}}2 & \text{if } G_{j-1} > F_{j-1} \\
            \frac{G_{l_{j-1}} + G_{r_{j-1}}}2 - G_{j-1} &\text{else}
          \end{dcases}
        \end{rcases} \\
      &\leq \begin{rcases}
          \begin{dcases}
            G_{j}-\frac{G_j + G_{r_j}}2 & \text{if } G_{j} > F_{j} \\
            \frac{G_{l_j} + G_j}2 - G_{j} &\text{else}
          \end{dcases}
        \end{rcases} +  \begin{rcases}
          \begin{dcases}
            G_{j-1}-\frac{G_{j-1} + G_{r_{j-1}}}2 & \text{if } G_{j-1} > F_{j-1} \\
            \frac{G_{l_{j-1}} + G_{j-1}}2 - G_{j-1} &\text{else}
          \end{dcases}
        \end{rcases} \\
      &= \frac12 \begin{rcases}
          \begin{dcases}
            G_j - G_{r_j} & \text{if } G_{j} > F_{j} \\
            G_{l_j} - G_j &\text{else}
          \end{dcases}
        \end{rcases} +  \frac12 \begin{rcases}
          \begin{dcases}
            G_{j-1} - G_{r_{j-1}} & \text{if } G_{j-1} > F_{j-1} \\
            G_{l_{j-1}} - G_{j-1} &\text{else}
          \end{dcases}
        \end{rcases} \\
        &\leq 12b\sqrt{\max(r_j-j,j-l_j)}\ln\frac{2k}\beta 
        \leq 12b\sqrt{\max(k-j,j)}\ln\frac{2k}\beta,
  \end{align}
where line 5 is because $G_{r_m}\leq G_{m}\leq G_{l_m}$ for all $j$, and line 7 by \cref{eq:pmf.3}.  It then follows that
\begin{equation}
  \enVert{x-y}_1 = \sum_{j=1}^k \envert{x_j-y_j} \leq 24bk\ln\frac{2k}\beta.
\end{equation}

Lastly, using the fact that the $L^\infty$ error of $L^\infty$ isotonic regression is $\nicefrac12\max_{\ell\leq m}(G_\ell-G_m)$~\citep{stout2017InfinityIsotonicRegression}, and by \cref{eq:pmf.3} again,
\begin{align}
  0\leq \enVert{F-G}_\infty  = \frac12\max_{\ell\leq m} (G_\ell - G_m) \leq 6 b\sqrt{k}\ln\frac{2k}\beta. 
\end{align}

\paragraph{Third Term in \cref{eq:pmf.steps}.}

  Note that because $a_j\in[0,s]$, by \cref{eq:sum.bounds},
  \begin{align}
    \min_j G_j&= \min_j \frac12(F_{l_j}+F_{r_j}) \geq  \min_j F_{j}   = \min_j \sum_{m=1}^{j}\rbr*{a_m+\xi_m} \\
    &\geq \min_j \sum_{m=1}^{j}\xi_m \geq -\max_j\envert*{\sum_{j=1}^k\xi_j}\geq -12b\sqrt {k}\ln\frac{2k}\beta, \label{eq:pmf.4}
  \end{align}
  and similarly,
  \begin{equation}
    \max_j G_j \leq \max_j \sum_{m=1}^{j}\rbr*{a_m+\xi_m} \leq s + 12b\sqrt {k}\ln\frac{2k}\beta.\label{eq:pmf.5}
  \end{equation}

  Since $G$ is nondecreasing after isotonic regression, clipping only affects its prefix and/or suffix.
  For the prefix, let ${l}=\max\{j\in[k]:H_j=0\}$. If ${l}$ does not exist, then no clipping to zero has occurred.  Otherwise, for all $j\leq {l}$, by \cref{eq:pmf.4},
  \begin{equation}
  \envert{G_j-H_j} = \max(-G_j, 0) \leq 12b\sqrt {k}\ln\frac{2k}\beta,
\end{equation}
and
    \begin{equation}
      \envert{y_j-z_j} = \envert{G_j-G_{j-1} - (H_j - H_{j-1})} \leq - G_j - G_{j-1} \leq 2\max\rbr*{-\min_j G_j,0} \leq 24b\sqrt {k}\ln\frac{2k}\beta.
    \end{equation}
  For the suffix, let $r=\min\{j\in[k]:H_j=t\}$, then for all $r\leq j<k$, by \cref{eq:pmf.5},
    \begin{equation}
      \envert{G_j-H_j} = G_j - t \leq \envert{s - t} + 12b\sqrt {k}\ln\frac{2k}\beta,
    \end{equation}
    and
    \begin{equation}
      \envert{y_j-z_j} \leq \rbr*{G_j - t} + \max(G_{j-1} - t,0)  \leq 2\max\rbr*{\max_j G_j-t,0} \leq 2\envert{s-t} + 24b\sqrt {k}\ln\frac{2k}\beta;
    \end{equation}    
    for $j=k$,
    \begin{align}
       \envert{G_k-H_k}  \leq \begin{cases}
        G_k - t & \text{if } G_k \geq t \\
        t- G_k  & \text{else}
      \end{cases}
      = \begin{cases}
        G_k - t & \text{if } G_k \geq t \\
        t-\rbr*{s + \sum_{m=1}^k\xi_m}  & \text{else}
      \end{cases}
       \leq \envert{s-t} + 12b\sqrt {k}\ln\frac{2k}\beta,
    \end{align}
    and
    \begin{align}
      \envert{y_j-z_j} &\leq \envert*{G_j - t} + \max(G_{j-1} - t,0) \leq 2\envert{s-t} + 24b\sqrt {k}\ln\frac{2k}\beta.
    \end{align}    

  Finally, for $\|\cdot\|_1$,
  \begin{align}
    \enVert*{y-z}_1 &= \sum_{j=1}^{l} \envert{G_j-G_{j-1} - (H_j - H_{j-1})} + \sum_{j=r}^k\envert{G_j-G_{j-1} - (H_j - H_{j-1})} \\
    &= \sum_{j=1}^{l} \rbr{G_j-G_{j-1}} + \envert{y_r-z_r} + \sum_{j=r+1}^{k-1}\rbr{G_j-G_{j-1}} + \envert{y_k-z_k} \\
    &= G_{l}-G_{1} + \envert{y_r-z_r} + G_{k-1}-G_{r} + \envert{y_k-z_k} \\
    &\leq -G_{1} + \envert{y_r-z_r} + G_{k-1}-t + \envert{y_k-z_k} \\
    &\leq 12b\sqrt {k}\ln\frac{2k}\beta + 2\rbr*{\envert{s-t} + 12b\sqrt {k}\ln\frac{2k}\beta} + \rbr*{s + 12b\sqrt {k}\ln\frac{2k}\beta}-t  \\
    &\leq 3\envert{s-t} + 48b\sqrt {k}\ln\frac{2k}\beta,
  \end{align}
  keep in mind that $1\leq l$ and $r\leq k-1,k$ on line 3 and onward.
  
  The result follows by taking a final union bound over the two events above and rescaling $\beta\gets \beta/2$.
\end{proof}

\begin{proof}[Proof of \cref{thm:private.estimates}]
  Because $w_a\geq 0$, by \cref{cor:laplace}, with probability at least $1-\beta$,
  \begin{align}
    \envert{w_a-\tilde w_a} 
    &= \envert*{w_a - \max\rbr*{w_a + \sum_{j=1}^k\mathrm{Laplace}(0,\nicefrac2{n\varepsilon}), 0}} \\
    &= \envert*{\max\rbr*{\sum_{j=1}^k\mathrm{Laplace}(0,\nicefrac2{n\varepsilon}), -w_a}}  \\
    &\leq \envert*{\sum_{j=1}^k\mathrm{Laplace}(0,\nicefrac2{n\varepsilon})} \\
    &\leq O\rbr*{\frac{\sqrt{k}}{n\varepsilon}\ln\frac{k|\calA|}{\beta}}. \label{eq:main.1}
  \end{align}

  Next, define
  \begin{align}
    \hat p_a(v_j) = \frac{1}{n_a}\sum_{i=1}^n\1[h\circ f(x_i,a_i), a_i=a], \quad  \check p_a(v_j) = \frac{1}{\tilde w_a}\tilde p(a,v_j),
  \end{align}
  where $n_a=\nicefrac{1}{n}\sum_{i=1}^n\1[a_i=a]$.  Note that $\widecheck F_a(v_j)=\sum_{\ell=1}^j\check p_a(v_\ell)$.  By triangle inequality,
  \begin{align}
    \MoveEqLeft \enVert{p_a - \tilde p_a}_\infty \\
    &\leq \enVert{\tilde p_a - \check p_a}_\infty + \enVert{\check p_a - \hat p_a}_\infty + \enVert{\hat p_a - p_a}_\infty \\
    &= \enVert*{\tilde p_a - \frac1{\tilde w_a}\check p(a,\cdot)}_\infty + \enVert*{\frac1{\tilde w_a}\check p(a,\cdot) - \frac1{\hat w_a}\hat p(a,\cdot)}_\infty + \enVert{\hat p_a - p_a}_\infty \\
    &\leq \enVert*{\tilde p_a - \frac1{\hat w_a}\check p(a,\cdot)}_\infty + 2\enVert*{\frac1{\tilde w_a}\check p(a,\cdot) - \frac1{\hat w_a}\check p(a,\cdot)}_\infty + \enVert*{\frac1{\hat w_a}\check p(a,\cdot) - \frac1{\hat w_a}\hat p(a,\cdot)}_\infty + \enVert{\hat p_a - p_a}_\infty \\
    &= \frac1{\hat w_a}\enVert*{\hat w_a \tilde p_a - \check p(a,\cdot)}_\infty + \frac2{\hat w_a}\envert*{\hat w_a - \tilde w_a} + \frac1{\hat w_a}\enVert{\check p(a,\cdot) - \hat p(a,\cdot)}_\infty + \enVert{\hat p_a - p_a}_\infty\\
    &\leq \begin{multlined}[t][5.5in]
        \frac1{\hat w_a}\enVert*{\tilde w_a \tilde p_a - \hat w_a \tilde p_a}_\infty + \frac1{\hat w_a}\enVert*{\tilde w_a \tilde p_a - \check p(a,\cdot)}_\infty + \frac2{\hat w_a}\envert*{\hat w_a - \tilde w_a} \\ + \frac1{\hat w_a}\enVert{\check p(a,\cdot) - \hat p(a,\cdot)}_\infty + \enVert{\hat p_a - p_a}_\infty
    \end{multlined}\\
    &\leq  \frac1{\hat w_a}\enVert*{\tilde w_a \tilde p_a - \check p(a,\cdot)}_\infty + \frac3{\hat w_a}\envert*{\hat w_a - \tilde w_a} + \frac1{\hat w_a}\enVert{\check p(a,\cdot) - \hat p(a,\cdot)}_\infty + \enVert{\hat p_a - p_a}_\infty\\
    &\leq O\rbr*{  \frac1{\hat w_a} \rbr*{\envert*{\hat w_a - \tilde w_a} + \frac{\sqrt k}{n\varepsilon}\ln\frac{k|\calA|}{\beta}+  \frac{1}{n\varepsilon}\ln\frac{k|\calA|}{\beta}} +  \sqrt{\frac{1}{nw_a}\ln\frac{k|\calA|}\beta} }  \\
    &\leq O\rbr*{ \frac1{\hat w_a} \rbr*{\frac{\sqrt{k}}{n\varepsilon}\ln\frac{k|\calA|}{\beta} + \frac{\sqrt k}{n\varepsilon}\ln\frac{k|\calA|}{\beta}+  \frac{1}{n\varepsilon}\ln\frac{k|\calA|}{\beta}} +  \sqrt{\frac{1}{nw_a}\ln\frac{k|\calA|}\beta} }  \\
    &\leq O\rbr*{ \frac{\sqrt{k}}{n\hat w_a\varepsilon}\ln\frac{k|\calA|}{\beta} +  \sqrt{\frac{1}{nw_a}\ln\frac{k|\calA|}\beta}  }\\
    &\leq O\rbr*{ \frac{\sqrt{k}}{n w_a\varepsilon}\ln\frac{k|\calA|}{\beta} +  \sqrt{\frac{1}{nw_a}\ln\frac{k|\calA|}\beta}  }
  \end{align}
  by \cref{eq:main.1,lem:1,lem:2}; the bound on the $\enVert{\check p(a,\cdot) - \hat p(a,\cdot)}_\infty$ follows the same analysis in the proof of \cref{lem:2} for the first term.

  Similarly,
  \begin{align}
    \enVert{p_a - \tilde p_a}_1 
    &\leq  \frac1{\hat w_a}\enVert*{\tilde w_a \tilde p_a - \check p(a,\cdot)}_1 + \frac3{\hat w_a}\envert*{\hat w_a - \tilde w_a} + \frac1{\hat w_a}\enVert{\check p(a,\cdot) - \hat p(a,\cdot)}_1 + \enVert{\hat p_a - p_a}_1\\
    &\leq O\rbr*{ \frac{{k}}{n w_a\varepsilon}\ln\frac{k|\calA|}{\beta} +  \sqrt{\frac{k}{nw_a}\ln\frac{|\calA|}\beta}  },
  \end{align}
  and
  \begin{align}
    \dKS(p_a, \tilde p_a) 
    &\leq \begin{multlined}[t][4.85in]
        \frac1{\hat w_a}\max_j\envert*{\sum_{\ell=1}^j \rbr*{  \tilde w_a \tilde p_a(v_\ell) - \check p(a,v_\ell) } } + \frac3{\hat w_a}\envert*{\hat w_a - \tilde w_a} \\+ \frac1{\hat w_a}\max_j\envert*{\sum_{\ell=1}^j \rbr*{\check p(a,\cdot) - \hat p(a,\cdot)}} + \dKS(\hat p_a, p_a)
    \end{multlined}\\
    &\leq O\rbr*{ \frac{\sqrt{k}}{n w_a\varepsilon}\ln\frac{k|\calA|}{\beta} +  \sqrt{\frac{k}{nw_a}\ln\frac{k|\calA|}\beta}  }.\qedhere
  \end{align}
\end{proof}

For the proof of \cref{thm:main}, we need the following technical result for the difference of $W_2^2$ distances:

\begin{lemma}[\citealp{chizat2020FasterWassersteinDistance}]\label{lem:w22.diff}
  Let $\mu,\mu',\nu,\nu'$ be distributions whose supports are contained in the centered ball of radius $R$ in $\RR^d$, then
  \begin{align}
    \MoveEqLeft\envert*{W_2^2(\mu,\nu) - W_2^2(\mu',\nu')} \\
    &\leq \begin{multlined}[t]
  \envert*{\int \|x\|_2^2 \dif(\mu-\mu')(x)} + \envert*{\int \|x\|_2^2 \dif(\nu-\nu')(x)} \\ + 2R\sup_{\textnormal{convex } f\in\mathrm{Lip}(1)}\envert*{\int f(x) \dif(\mu-\mu')(x)} + 2R\sup_{\textnormal{convex } g\in\mathrm{Lip}(1)}\envert*{\int g(x) \dif(\nu-\nu')(x)}
\end{multlined} \\
&\leq 4RW_1(\mu,\mu') + 4RW_1(\nu,\nu').
  \end{align}
\end{lemma}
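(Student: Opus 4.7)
My plan is to reduce the bound to Kantorovich--Brenier duality for the squared Wasserstein distance. Expanding $\|x-y\|_2^2 = \|x\|_2^2 + \|y\|_2^2 - 2\langle x,y\rangle$ gives
\[W_2^2(\mu,\nu) = \int \|x\|_2^2\dif\mu(x) + \int \|y\|_2^2\dif\nu(y) - 2\,S(\mu,\nu), \quad S(\mu,\nu) \coloneqq \sup_{\pi\in\Pi(\mu,\nu)}\int \langle x,y\rangle\dif\pi(x,y).\]
Taking the difference $W_2^2(\mu,\nu)-W_2^2(\mu',\nu')$ immediately splits off the two $\|x\|_2^2$-integral terms in the statement, so the entire proof reduces to controlling $|S(\mu,\nu)-S(\mu',\nu')|$ by the convex 1-Lipschitz suprema in the bound.

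\textbf{Dual comparison of $S$.} By Kantorovich duality for the bilinear cost,
\[S(\mu,\nu) = \inf_{\phi\text{ convex}}\bigl[\textstyle\int\phi^*(x)\dif\mu(x) + \int\phi(y)\dif\nu(y)\bigr],\]
with $\phi^*$ the Legendre transform. Let $(\phi,\phi^*)$ be optimal dual potentials for $(\mu,\nu)$; using them as a suboptimal feasible pair for the dual of $S(\mu',\nu')$ gives $S(\mu',\nu') \leq \int \phi^*\dif\mu' + \int\phi\dif\nu'$, hence $S(\mu,\nu) - S(\mu',\nu') \geq \int\phi^*\dif(\mu-\mu') + \int\phi\dif(\nu-\nu')$. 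Swapping the roles of the primed and unprimed pairs yields the matching upper bound, so $|S(\mu,\nu)-S(\mu',\nu')|$ is bounded by absolute-value integrals of the optimal potentials against $\mu-\mu'$ and $\nu-\nu'$.

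\textbf{Lipschitz control on the potentials.} The key structural step: because $\supp(\nu)\subseteq B_R$, the Brenier map $\nabla\phi$ takes values in $B_R$ on $\supp(\mu)$, so $\|\nabla\phi\|\le R$ and $\phi$ can be replaced by the Legendre biconjugate of its restriction to $B_R$ (equivalently, the $c$-concave envelope), which coincides with $\phi$ on $\supp(\mu)$ but is globally $R$-Lipschitz and convex on $\RR^d$. Symmetrically for $\phi^*$. Then $\phi/R$ and $\phi^*/R$ lie in the class of convex 1-Lipschitz functions, giving
\[|S(\mu,\nu)-S(\mu',\nu')| \leq R\sup_{f\text{ convex}\in\mathrm{Lip}(1)}\envert*{\textstyle\int f\dif(\mu-\mu')} + R\sup_{g\text{ convex}\in\mathrm{Lip}(1)}\envert*{\int g\dif(\nu-\nu')}.\]
Multiplying by the factor of $2$ in front of $S$ produces the $2R$ coefficients of the first stated inequality.

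\textbf{From the dual bound to $W_1$, and the main obstacle.} The second inequality is then immediate: $\|x\|_2^2$ is $2R$-Lipschitz on $B_R$, so by Kantorovich--Rubinstein $|\int\|x\|_2^2\dif(\mu-\mu')| \leq 2R\,W_1(\mu,\mu')$; and the convex 1-Lipschitz class is contained in the 1-Lipschitz class, so each convex-Lipschitz supremum is bounded by the corresponding $W_1$. Summing all four contributions gives $4R\,W_1(\mu,\mu')+4R\,W_1(\nu,\nu')$. I expect the main obstacle to be the Lipschitz control on the dual potentials: the standard Brenier regularity only controls $\nabla\phi$ $\mu$-almost everywhere on $\supp(\mu)$, so one must carefully justify replacing $\phi$ by a globally $R$-Lipschitz convex representative without changing the integrals defining the dual value, and then argue that this representative is a valid competitor in the dual of the other problem. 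Once that subtlety is handled, the remaining calculations are routine.
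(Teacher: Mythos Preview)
Your proposal is correct. The paper itself does not prove the first inequality at all---it is stated as a cited lemma from \citet{chizat2020FasterWassersteinDistance}---and only justifies the second line, using exactly your argument: $x\mapsto\|x\|_2^2$ is $2R$-Lipschitz on $B_R$, the convex $1$-Lipschitz class sits inside $\mathrm{Lip}(1)$, and Kantorovich--Rubinstein duality gives the $W_1$ bound. Your sketch for the first inequality via the Brenier semi-dual decomposition $W_2^2=\int\|x\|^2\dif\mu+\int\|y\|^2\dif\nu-2S(\mu,\nu)$ and the convex-conjugate dual for $S$ is precisely the approach in the cited reference, so you have in fact gone beyond what the paper supplies; the ``main obstacle'' you flag (replacing the optimal potential by a globally $R$-Lipschitz convex representative) is the standard $c$-transform/double-Legendre regularisation step and is handled there.
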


The last line follows from the dual representation of $W_1$ distance for distributions with bounded support:
\begin{equation}
  W_1(\mu,\nu) = \sup_{f\in\mathrm{Lip}(1)}\envert*{\int f(x) \dif(\mu-\nu)(x)},
\end{equation}
and the fact that $x\mapsto \|x\|_2^2$ is $2R$-Lipschitz on the centered ball of radius $R$.

Also, recall the fact that the $W_1$ distance of distributions supported on a ball of radius $R$ can be upper bounded by total variation distance:
\begin{align}
    W_1(\mu,\nu)&=\inf_{\pi\in\Pi(\mu,\nu)} \int d(x,y) \dif \pi(x,y) \\
    &\leq 2R\inf_{\pi\in\Pi(\mu,\nu)} \int \1[x\neq y] \dif \pi(x,y) \\
    &= 2R\rbr*{1-\sup_{\pi\in\Pi(\mu,\nu)} \int \1[x= y] \dif \pi(x,y)} \\
    &= 2R\rbr*{1- \int \min\rbr*{\mu(x), \nu(x)} \dif x}\\
    &= 2R{\int \max\rbr*{0, \nu(x) - \mu(x)} \dif x} \\
    &= R{\int  \envert*{ \nu(x) - \mu(x) } \dif x} \\
    &\eqqcolon R \|\mu - \nu\|_1 \\
    &\eqqcolon 2R \dTV(\mu,\nu), \label{eq:w1.01loss.1}
\end{align}
where line 6 is because $\int \rbr{ \mu(x) - \nu(x) } \dif x = 0$.

And, note the following simple fact regarding optimal transports $T^*:\RR\rightarrow\RR$ under the squared cost; in the special case where $T^*$ is a Monge transportation plan, the lemma is equivalent to saying that $T^*$ is a nondecreasing function (see the last panel of \cref{fig:steps} for a picture):

\begin{lemma}\label{lem:transport.property}
  Let $p,q$ be two distributions supported on $[k]$, and $\pi\in\argmin_{\pi'\in\Pi(p,q)}\sum_{m,\ell} (m-\ell)^2 \pi(m,\ell)$, then for all $j\in[k]$,
  \begin{equation}
    f(m)\coloneqq \sum_{\ell=1}^j\frac{\pi(m,\ell)}{p(m)} \begin{cases}
      = 1 & \text{if } m < l_j \\
      \in[0,1] & \text{if } m = l_j \\
      =0 & \text{if } m > l_j, \\
    \end{cases} \quad \forall m \textrm{ s.t. } p(m)>0, \label{eq:transport.property}
  \end{equation}
  for some $l_1,\dots,l_k\in[k]$.
\end{lemma}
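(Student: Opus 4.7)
The plan is to establish the classical non-crossing (Monge) property of any optimal coupling under the squared cost on the line, and then read off the threshold index $l_j$ as a direct consequence of this structural property.

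\textbf{Step 1 (monotonicity of $\pi$ via a swap).} I would argue by contradiction: suppose there exist $m_1<m_2$ and $\ell_1<\ell_2$ in $[k]$ with $\pi(m_1,\ell_2)>0$ and $\pi(m_2,\ell_1)>0$. Let $\delta=\min(\pi(m_1,\ell_2),\pi(m_2,\ell_1))>0$ and define $\pi'$ by subtracting $\delta$ from the ``crossed'' entries $(m_1,\ell_2),(m_2,\ell_1)$ and adding $\delta$ to the ``uncrossed'' entries $(m_1,\ell_1),(m_2,\ell_2)$, leaving all other entries unchanged. Then $\pi'\ge 0$ and the row/column sums of $\pi'$ agree with those of $\pi$, so $\pi'\in\Pi(p,q)$. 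The cost difference is $\delta\,[(m_1-\ell_1)^2+(m_2-\ell_2)^2-(m_1-\ell_2)^2-(m_2-\ell_1)^2] = -2\delta(m_2-m_1)(\ell_2-\ell_1) < 0$, contradicting the optimality of $\pi$. Consequently, for any $m_1<m_2$ with $p(m_1),p(m_2)>0$,
\[
  \max\{\ell : \pi(m_1,\ell)>0\} \;\le\; \min\{\ell : \pi(m_2,\ell)>0\}.
\]

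\textbf{Step 2 (choice of $l_j$ and verification of the three cases).} Fix $j\in[k]$. If $f(m)=1$ for every $m$ with $p(m)>0$, set $l_j=k$; otherwise take $l_j$ to be the smallest $m\in[k]$ with $p(m)>0$ and $f(m)<1$, i.e.\ with $\pi(l_j,\ell)>0$ for some $\ell>j$. Then, for $m<l_j$ with $p(m)>0$, the minimality of $l_j$ forces $f(m)=1$; $f(l_j)\in[0,1]$ is immediate; and for $m>l_j$ with $p(m)>0$, the monotonicity from Step 1 yields $\min\{\ell:\pi(m,\ell)>0\}\ge\max\{\ell:\pi(l_j,\ell)>0\}>j$, so $\pi(m,\ell)=0$ for every $\ell\le j$ and hence $f(m)=0$.

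\textbf{Main obstacle.} The substantive content lives entirely in the swap of Step 1 (the Monge property on $\mathbb{R}$ under the squared cost); Step 2 is then just bookkeeping. The subtleties I would watch for are (i) checking that the swap preserves nonnegativity and both marginals (automatic from the choice of $\delta$), and (ii) covering the edge cases where $f\equiv 1$ or $f\equiv 0$ on the support of $p$, or where some rows have $p(m)=0$ — the lemma statement accommodates these by quantifying only over $m$ with $p(m)>0$ and allowing $l_j$ to be any index in $[k]$.
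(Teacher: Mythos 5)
Your proposal is correct and rests on the same key idea as the paper's proof: a mass-exchange (swap) argument showing that an optimal coupling for the squared cost on the line cannot have crossing entries, with the cost decrease $-2\delta(m_2-m_1)(\ell_2-\ell_1)$ matching the paper's computation. The only difference is organizational—you establish the full non-crossing (Monge) property first and then construct $l_j$ explicitly as the smallest supported $m$ with $f(m)<1$, whereas the paper negates the threshold structure directly and swaps between adjacent rows—so this is essentially the same argument, and your bookkeeping in Step 2 is sound.
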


\begin{proof}
  Let $j\in[k]$ be arbitrary.  Suppose to the contrary that $\nexists l_j$ such that \cref{eq:transport.property} holds, then either $f$ as a function of $m$ is not non-increasing, i.e., $f(m+1)>f(m)$ for some $m$, or $0<f(m+1)\leq f(m)<1$.  We show that either of these contradicts the optimality of $\pi$.
  
  In both cases, there must exists $l\leq j<r$ such that $\pi(m,r),\pi(m+1,l)\geq q$ for some $q> 0$, because $\sum_{\ell=1}^j\pi(m+1,\ell)=p(m+1)f(m+1)>0$ and $\sum_{\ell=j+1}^k\pi(m,\ell)=p(m)(1-f(m))>0$.  Then a coupling $\gamma$ with a lower cost can be constructed by (partially) exchanging the two entries:
  \begin{equation}
    \gamma(i,\ell)= \begin{cases}
      \pi(m,r) - q  &\text{if } i= m, \ell = r \\
      \pi(m,l) + q  &\text{if } i= m, \ell = l \\
      \pi(m+1,r) + q  &\text{if } i= m+1, \ell = r \\
      \pi(m+1,l) - q  &\text{if } i= m+1, \ell = l \\
      \pi(i,\ell) &\text{else}.\\
    \end{cases}
  \end{equation}
  We verify that it has a lower cost than $\pi$:
  \begin{align}
    \sum_{i,\ell}(i-\ell)^2 (\gamma-\pi)(i,\ell)
    &=  -q(m-r)^2 + q(m-l)^2 + q(m+1-r)^2 - q(m+1-l)^2 
    \\&= 2q(l-r) \\&< 0. \qedhere
  \end{align}
\end{proof}

\begin{proof}[Proof of \cref{thm:main}]

This proof also relies on properly applying the triangle inequality to decompose into comparable terms.  We list the terms that will be compared here:
\begin{itemize}
  \item Denote the Bayes regressor by $f^*$, and recall that $f$ is the regressor being post-processed.  Denote the output distribution of $f^*$ conditioned on group $a$ by $r^*_a\coloneqq f^*(\cdot,a)\sharp\mu_{X\mid a}$, which will be compared to that of $f$, $r_a\coloneqq f(\cdot,a)\sharp\mu_{X\mid a}$.

  \item Given a discretizer $h$, the discretized conditional output distribution of $f^*$ is denoted by $p^*_a\coloneqq h\sharp r^*_a$, and that of $f$ by $p_a\coloneqq h\sharp r_a$.  We will compare $r^*_a$ to its discretized version $p^*_a$, and $p_a$ to $\tilde p_a$, the empirical conditional discretized output distributions of $f$ estimated privately.
  
  \item Denote the private group marginal distribution estimated from the samples by $\tilde w_a$, which will be compared to the ground-truth $w_a\coloneqq \Pr(A=a)$.

  \item Let $(\cdot,\{\tilde q'_a\}_{a\in\calA})\gets \Prob(\{\tilde p_a\}_{a\in\calA},\{\tilde w_a\}_{a\in\calA},\alpha)$ and $(\cdot,\cdot,\{\tilde q_a\}_{a\in\calA})\gets \LProb(\{\tilde p_a\}_{a\in\calA},\{\tilde w_a\}_{a\in\calA},\alpha)$.  The difference between $\tilde q'_a,\tilde q_a$ is that the support of the latter is restricted to $v$.

  Recall that the fair regressor returned from \cref{alg:post.proc} has the form $\bar f=g_a\circ h\circ f$, where $g_a$ is the optimal transport from $\tilde p_a$ to $\tilde q_a$.  The $\tilde q_a$'s will be compared to the $\tilde q'_a$'s, which are in turn compared to the output distributions $q^*_a$ of an optimal $\alpha$-fair regressor, denoted by $\bar f^*$ (note that $(\cdot,\{q^*_a\}_{a\in\calA})\gets \Prob(\{r^*_a\}_{a\in\calA},\{w_a\}_{a\in\calA},\alpha)$).
\end{itemize}

\paragraph{Error Bound.}
Note that $R(\bar f)-R(\bar f^*) = \ER(\bar f)-\ER(\bar f^*)$.  We begin with analyzing the first term.   By the orthogonality principle,
\begin{align}
  \ER(\bar f) 
  &= \E\sbr*{\rbr*{ \bar f(X,A) - f^*(X,A) }^2}\\
  &= \sum_{a\in\calA} w_a \E_{X\sim\mu_{X\mid a}} \sbr*{\rbr*{ \bar f(X,a) - f^*(X,a) }^2} \\
  &= \sum_{a\in\calA} w_a \E_{X\sim\mu_{X\mid a}} \sbr*{\rbr*{ \bar f(X,a) - f(X,a) + \rbr*{f(X,a) - f^*(X,a)} }^2} \\
  &= \begin{multlined}[t][5in]  \sum_{a\in\calA} w_a \Big( \E_{X\sim\mu_{X\mid a}} \Big[ \rbr*{ \bar f(X,a) - f(X,a) }^2 + \rbr*{ f(X,a) - f^*(X,a) }^2  \\[-0.5em]
  + 2\rbr*{ \bar f(X,a) - f(X,a) }\rbr*{ f(X,a) - f^*(X,a) } \Big]\Big)
  \end{multlined} \\
  &\leq \sum_{a\in\calA} w_a \E_{X\sim\mu_{X\mid a}} \sbr*{\rbr*{ \bar f(X,a) - f(X,a)}^2} + \underbrace{ 3\E\sbr*{\envert*{f(X,A) - f^*(X,A)} }}_{\calE_{1}},
\end{align}
where line 5 is because of the assumption that the images of $\bar f,f$ are contained in $[0,1]$.  The second term on the last line is the $L^1$ excess risk of $f$; for the first term,
\begin{align}
  \MoveEqLeft \E_{X\sim\mu_{X\mid a}} \sbr*{\rbr*{ \bar f(X,a) - f(X,a)}^2} \\
  &= \E_{X\sim\mu_{X\mid a}} \sbr*{\rbr*{ g_a\circ h\circ  f(X,a) - f(X,a)}^2} \\
  &= \E_{X\sim\mu_{X\mid a}} \sbr*{\rbr*{ g_a\circ h\circ  f(X,a) - h\circ f(X,a) + \rbr*{ h\circ  f(X,a) - f(X,a) }}^2} \\
  &\leq \E_{X\sim\mu_{X\mid a}} \sbr*{\rbr*{ g_a\circ h\circ  f(X,a) - h\circ f(X,a)}^2 + 3\envert*{h\circ  f(X,a) - f(X,a)} } \\
  &\leq  \E_{X\sim\mu_{X\mid a}} \sbr*{\rbr*{ g_a\circ h\circ  f(X,a) - h\circ f(X,a)}^2 }  + \frac{3}{2k} \\
  &= \sum_{j=1}^k  p_a(v_j) \E \sbr*{\rbr*{ g_a(v_j) - v_j}^2 }  + \frac{3}{2k} \\
  &= \sum_{j=1}^k  \tilde p_a(v_j) \E \sbr*{\rbr*{ g_a(v_j) - v_j}^2 } + \sum_{j=1}^k ( p_a(v_j) -  \tilde p_a(v_j)) \E \sbr*{\rbr*{ g_a(v_j) - v_j}^2 }  + \frac{3}{2k} \\
  &\leq  W_2^2(\tilde p_a, \tilde q_a) + \enVert*{  p_a -  \tilde p_a}_1  + \frac{3}{2k} \\
  &\leq  W_2^2(\tilde p_a, \tilde q_a) +  O\rbr*{\sqrt{\frac{k}{nw_a}\ln\frac{k|\calA|}\beta} + \frac{k}{n w_a\varepsilon}\ln\frac{k|\calA|}{\beta}  }   + \frac{3}{2k},
\end{align}
where line 4 is because $h$ discretizes the input to the midpoint of the bin that it falls in, which displaces it by up to $\nicefrac L{2k}=\nicefrac1{2k}$; line 5 is because $p_a(v_j) = \Pr(h\circ f(X,a)=v_j)$; the first term on line 7 is because $g_a$ is the optimal transport from $\tilde p_a$ to $\tilde q_a$.  Then, combining the above, by \cref{thm:private.estimates}, with probability at least $1-\beta$,
\begin{align}
  \ER(\bar f) 
  &\leq \sum_{a\in\calA} \rbr*{ w_a W_2^2(\tilde p_a, \tilde q_a) + O\rbr*{ \sqrt{\frac{kw_a}{n}\ln\frac{k|\calA|}\beta}  +  \frac{k}{n\varepsilon}\ln\frac{k|\calA|}{\beta} }}   + \calE_1  + \frac{3}{2k} \\
  &\leq \sum_{a\in\calA}  w_a W_2^2(\tilde p_a, \tilde q_a) + \calE_1 + \underbrace{O\rbr*{ \sqrt{\frac{k|\calA|}{n}\ln\frac{k|\calA|}\beta}  +  \frac{k|\calA|}{n\varepsilon}\ln\frac{k|\calA|}{\beta} }}_{\calE_{2}}    + \frac{3}{2k} \\
  &\leq \sum_{a\in\calA} \rbr*{\tilde w_a W_2^2(\tilde p_a, \tilde q_a) + \envert*{\tilde w_a - w_a}W_2^2(\tilde p_a, \tilde q_a)} + \calE_1+ \calE_{2}    + \frac{3}{2k}  \\
  &\leq \sum_{a\in\calA} \rbr*{ \tilde w_a W_2^2(\tilde p_a, \tilde q_a) + \envert*{\tilde w_a - w_a}}  + \calE_1+ \calE_{2}    + \frac{3}{2k} \\
  &\leq \sum_{a\in\calA} \tilde w_a W_2^2(\tilde p_a, \tilde q_a) + \calE_1+ \calE_{2}    + \frac{3}{2k} \\
  &\leq \sum_{a\in\calA} \tilde w_a W_2^2(\tilde p_a, h\sharp \tilde q'_a) + \calE_{1}  + \calE_{2}  + \frac{3}{2k} \\
  &\leq \sum_{a\in\calA} \tilde w_a \rbr*{W_2(\tilde p_a, \tilde q'_a) + W_2(\tilde q'_a, h\sharp\tilde q'_a)}^2 + \calE_{1}  + \calE_{2}   + \frac{3}{2k} \\
  &= \sum_{a\in\calA} \tilde w_a \rbr*{W^2_2(\tilde p_a, \tilde q'_a) + 2W_2(\tilde p_a, \tilde q'_a) W_2(\tilde q'_a, h\sharp\tilde q'_a) + W^2_2(\tilde q'_a, h\sharp\tilde q'_a)} + \calE_{1}  + \calE_{2}   + \frac{3}{2k} \\
  &\leq \sum_{a\in\calA} \tilde w_a W_2^2(\tilde p_a, \tilde q'_a) + \calE_{1}  + \calE_{2}   + \frac{3}{k} \\
  &\leq \sum_{a\in\calA} \tilde w_a W_2^2(\tilde p_a, q^*_a) + \calE_{1}  + \calE_{2}   + \frac{3}{k},
\end{align}
where line 6 follows by noting that $\{h\sharp \tilde q'_a\}_{a\in\calA}$ is a feasible solution to $\LProb$, as it can be verified that $\dKS(h\sharp \tilde q'_a,h\sharp \tilde q'_{a'})\leq\alpha$ given that $\dKS(\tilde q'_a,\tilde q'_{a'})\leq\alpha$, $\forall a,a'\in\calA$ (hence restricting the support of the barycenter to $v$ introduces an additional error of $\nicefrac3{2k}$ as discussed in footnote~\ref{fn:restrict}), and the last line is because $\{\tilde q'_a\}_{a\in\calA}$ is a minimizer of $\Prob(\{\tilde p_a\}_{a\in\calA},\{\tilde w_a\}_{a\in\calA},\alpha)$.

So for the suboptimality of $\bar h$, by \cref{thm:opt.fair.equiv}, 
\begin{align}
  \ER(\bar f) - \ER(\bar f^*) 
  &\leq \sum_{a\in\calA} \tilde w_a \rbr*{ W_2^2(\tilde p_a, q^*_a) - W_2^2( r^*_a, q^*_a)} + \calE_{1} + \calE_{2}   + \frac{3}{k} \\
  &\leq \sum_{a\in\calA} \tilde w_a \rbr*{ W_2^2(\tilde p_a, q^*_a) - \rbr*{W_2(q^*_a, p^*_a) - W_2( r^*_a, p^*_a)}}^2 + \calE_{1} + \calE_{2}   + \frac{3}{k}\\
  &\leq \sum_{a\in\calA} \tilde w_a \rbr*{ W_2^2(\tilde p_a, q^*_a) - W_2^2( p^*_a, q^*_a) + 2W_2( p^*_a, q^*_a)W_2( p^*_a, r^*_a) } + \calE_{1} + \calE_{2}   + \frac{3}{k}\\
  &\leq \sum_{a\in\calA} \tilde w_a \rbr*{ W_2^2(\tilde p_a, q^*_a) - W_2^2( p^*_a, q^*_a)  } +\frac1k  + \calE_{1} + \calE_{2}   + \frac{3}{k},
\end{align}
where the last line is because $h$ is a transport from $r^*_a$ to $p^*_a$ with displacements of at most $\nicefrac{1}{2k}$;  for the first term, by \cref{lem:w22.diff,eq:w1.01loss.1},
\begin{align}
  \MoveEqLeft\sum_{a\in\calA}\tilde w_a \rbr*{ W_2^2(\tilde p_a, q^*_a) - W_2^2( p^*_a, q^*_a) }\\
  &\leq 4 \sum_{a\in\calA}   \tilde w_a W_1(\tilde p_a,p^*_a) \\
  &\leq 4 \sum_{a\in\calA}   \tilde w_a \rbr*{W_1(\tilde p_a,p_a) + W_1(p_a,r_a) + W_1(r_a,r^*_a) + W_1(r^*_a,p^*_a) } \\
  &\leq 4 \sum_{a\in\calA}   \tilde w_a \rbr*{\enVert*{\tilde p_a - p_a}_1 + \frac{1}{2k} + \E_{X\sim\mu_{X\mid a}}\sbr*{\envert*{f^*(X,a) - f(X,a)}} + \frac{1}{2k} }\\
  &\leq 4 \sum_{a\in\calA}   \tilde w_a \E_{X\sim\mu_{X\mid a}}\sbr*{\envert*{f^*(X,a) - f(X,a)}} + \calE_2 + \frac{4}{k} \\
  &\leq 4 \sum_{a\in\calA}   \rbr*{w_a - w_a + \tilde w_a } \E_{X\sim\mu_{X\mid a}}\sbr*{\envert*{f^*(X,a) - f(X,a)}} + \calE_2 + \frac{4}{k} \\
  &\leq 4  \sum_{a\in\calA}  w_a \E_{X\sim\mu_{X\mid a}}\sbr*{\envert*{f^*(X,a) - f(X,a)}} + 4 \sum_{a\in\calA}   \envert*{\tilde w_a - w_a} + 4\calE_2 + \frac{4}{k} \\
  &\leq 4 \calE_1 + \calE_2 + \frac{4}{k},
\end{align}
the third inequality is because the joint distribution of $(f(X,a),f^*(X,a))$ is a valid coupling belonging to $\Pi(r_a, r^*_a)$ that incurs a transportation cost of
\begin{align}
  \E\sbr{\envert{f^*(X,a) - f(X,a)}}  = \int \envert*{y-y^*} \dif \Pr(f(X,a)=y, f^*(X,a)=y^*) \geq W_1(r_a,r^*_a).
\end{align}

Putting everything together, and with a union bound over the two events above, gives the result in the theorem statement.

\paragraph{Fairness Guarantee.}
Let $\bar p_a$ denote the output distribution of the post-processed regressor $\bar f$ conditioned on group $a$.  Using triangle inequality, for any $a,a'\in\calA$,
\begin{equation}
  \dKS(\bar p_a, \bar p_{a'}) \leq \dKS( \tilde q_{a}, \tilde q_{a'}) + \dKS(\bar p_a, \tilde q_{a}) + \dKS(\bar p_{a'}, \tilde q_{a'})\leq \alpha + \dKS(\bar p_a, \tilde q_{a}) + \dKS(\bar p_{a'}, \tilde q_{a'}),
\end{equation}
where
\begin{align}
  \dKS(\bar p_{a}, \tilde q_{a}) 
  &= \max_j  \envert*{\sum_{\ell=1}^j \rbr*{ \bar p_a(v_\ell) - \tilde q_a(v_\ell) }} \\
  &= \max_j \envert*{\sum_{\ell=1}^j \rbr*{ \Pr(\bar f(X,a) =v_\ell \mid A=a) - \tilde q_a(\ell) }} \\
  &= \max_j\envert*{\sum_{\ell=1}^j \rbr*{ \sum_{m=1}^k p_a(v_m) \Pr(g_a(v_m) =v_\ell\mid A=a) - \sum_{m=1}^k \tilde p_a(v_m) \Pr(g_a(v_m) =v_\ell\mid A=a) }} \\
  &= \max_j\envert*{\sum_{m=1}^{k} \rbr*{ p_a(v_m)  -  \tilde p_a(v_m) }  \sum_{\ell=1}^j \Pr(g_a(v_m) =v_\ell\mid A=a)  } \\
  &\leq \max_j\rbr*{\envert*{\sum_{m=1}^{v_{\iota_j-1}} \rbr*{ p_a(v_m)  -  \tilde p_a(v_m) } } +   \envert*{\rbr*{ p_a(v_{\iota_j})  -  \tilde p_a(v_{\iota_j}) }\Pr(g_a(v_{\iota_j}) \leq v_j) } } \\
  &\leq \dKS(p_a, \tilde p_a) +  \max_j \envert*{ p_a(v_{\iota_j})  -  \tilde p_a(v_{\iota_j})  } \\
  &\leq O\rbr*{ \frac{\sqrt{k}}{n w_a\varepsilon}\ln\frac{k|\calA|}{\beta} +  \sqrt{\frac{k}{nw_a}\ln\frac{k|\calA|}\beta}  };
\end{align}
line 3 is because $g_a$ is a transport from $\tilde p_a$ to $\tilde q_a$, line 5 uses \cref{lem:transport.property} and the fact that $\sum_{\ell=1}^{j}\nicefrac{\pi_a(v_m,v_\ell)}{\tilde p_a(v_m)}=\Pr(g_a(v_m)\leq v_j)$, and line 7 is by \cref{thm:private.estimates}.
\end{proof}

\end{document}